\newcommand{\abs}[1]{\left| #1 \right|}
\newtheorem{theorem}{Theorem}
\newtheorem{lemma}[theorem]{Lemma}
\newtheorem{claim}[theorem]{Claim}
\newtheorem{corollary}[theorem]{Corollary}
\newtheorem{remark}[theorem]{Remark}
\DeclareMathOperator{\bE}{{\mathop{\mathbb{E}}}}
\newcommand{\D}{\mathcal{D}}
\newcommand{\A}{\mathcal{A}}
\newcommand{\E}{\mathcal{E}}
\newcommand{\G}{\mathtt{Top}(I)}
\newcommand{\DISJ}{{\em DISJ}}
\newcommand{\KZZ}{{\tt KZZ}}
\newcommand{\Unif}{{\tt Uniform}}
\newcommand{\CIID}{{\tt CollabTopIID}}
\newcommand{\CNIID}{{\tt CollabTopNIID}}
\title{Communication-Efficient Collaborative Best Arm Identification}
\author{
	Nikolai Karpov, \ \  %\equalcontrib,
	Qin Zhang%\equalcontrib
}
\begin{document}

\maketitle

\begin{abstract}
We investigate top-$m$ arm identification, a basic problem in bandit theory, in a multi-agent learning model in which agents collaborate to learn an objective function.  We are interested in 
designing collaborative learning algorithms that achieve maximum speedup (compared to single-agent learning algorithms) using minimum communication cost, as communication is frequently the bottleneck in multi-agent learning.  We give both algorithmic and impossibility results, and conduct a set of experiments to demonstrate the effectiveness of our algorithms. 
\end{abstract}

\section{Introduction}
\label{sec:intro}

As the scale of reinforcement learning continues to grow, multi-agent learning has become a necessity in many scenarios to speed up the learning process. In this paper, we explore {\em collaborative learning}, a multi-agent learning model introduced in~\citet{HKK+13} and~\citet{TZZ19} for studying parallel reinforcement learning under communication constraints.  Communication is frequently the bottleneck in multi-agent learning owing to network latency, energy consumption, mobile data usage, and privacy considerations.  For example, in the deep sea and outer space exploration, data transmission between agents (sensors, robots, etc.) is expensive since it consumes a lot of energy, which cannot be easily recharged. In the setting where communication makes use of a mobile network, data transmission directly adds to our bill. In scenarios where privacy is a concern, we also want to minimize the total amount of communication, since messages will leak information about local data which we do not want to share with other parties.
In this paper, we are interested in the following question:
\begin{quote}
\em How much communication is necessary and sufficient for scalable algorithms in the collaborative learning model?
\end{quote}

We study a basic problem in bandit theory: {\em top-$m$ arm identification} in multi-armed bandits (MAB); when $m = 1$, the problem is called {\em best arm identification}.   This problem has numerous applications, including medical tests, industrial engineering, evolutionary computation, and crowd-sourcing.  We study this problem in both the IID data setting in which agents learn from the same environment and the non-IID data setting in which agents learn from different environments.

\vspace{2mm}
\noindent{\bf The Collaborative Learning Model.\ \ }  We begin by introducing the collaborative learning model, in which $K$ agents communicate via a {\em coordinator} (central server) to learn a common objective function.  The learning process proceeds in rounds. During each round, each agent takes a sequence of actions (one at each time step) and receives a sequence of observations. Each action is taken based on the agent's previous actions and observations, messages received from other parties, and the randomness of the algorithm.  At the end of each round, each agent sends a message to the coordinator. After collecting messages and performing some computation, the coordinator sends a message back to each agent.  At the end of the last round, the coordinator outputs the answer.

The collaborative learning model is similar to the {\em federated learning} model,  which has attracted a lot of attention recently~\cite{corr/abs-1912-04977}.  The main difference is that in federated learning we aim at training machine learning models (e.g., neural networks) on multiple machines with local datasets, while the collaborative learning model is tailored for multi-agent reinforcement learning.

\vspace{2mm}
\noindent{\bf Top-$m$ Arm Identification in MAB.\ \ }  In this problem we have $n$ alternative arms; the $i$-th arm is associated with an unknown distribution $\D_i$ with support $[0,1]$.  Given a time budget $T$, we want to identify the top-$m$ arms/distributions with the largest means by a sequence of $T$ arm pulls (each pull takes a unit time). Upon each pull of the $i$-th arm, the agent observes
an i.i.d.\ sample from $\D_i$.  

In the collaborative learning model, agents pull arms in parallel in each round.  If multiple agents pull the same arm $i$ at a particular time step, then each agent gets an {\em independent} sample from distribution $\D_i$.  At the end of the last round, the coordinator needs to output the top-$m$ arms.

W.l.o.g., we assume that $n \ge 2m$, since otherwise, we can try to find the bottom $(n - m) < n/2$ arms, which is symmetric to identifying the top-$(n-m)$ arms.

\vspace{2mm}
\noindent{\bf Speedup in Collaborative Learning.\ \ }
We use the concept of {\em speedup} introduced in \citet{TZZ19} and \citet{KZZ20} to quantify the performance of multi-agent learning in the collaborative learning model.  For top-$m$ arm identification, given an input instance $I$ of $n$ arms with means $\mu_1, \ldots ,\mu_n$, define 
$$H^{\langle m \rangle} \triangleq H^{\langle m \rangle}(I) = \sum_{i \in [n]} \frac{1}{\Delta_i^2}, \quad \quad \text{where}$$ 
\begin{equation}
\label{eq:gap}
\Delta_i = \max\left\{\mu_i - \mu_{[m+1]}, \mu_{[m]} - \mu_{i}\right\},
\end{equation}
where $\mu_{[m]}$ stands for the $m$-th highest mean of the $n$ arms.  We will assume $\forall i \in [n], \Delta_i \neq 0$ (or, equivalently, $\mu_{[m]} \neq \mu_{[m+1]}$ so that the top-$m$ arms are unique), since otherwise $H^{\langle m \rangle} = \infty$.

It is known that there is a centralized algorithm for top-$m$ arm identification in MAB  that succeeds with probability at least $0.99$ given a time horizon $\tilde{O}(H^{\langle m \rangle})$~\cite{BWV13},\footnote{We use ` $\tilde{}$ ' on top of $O, \Omega, \Theta$ to hide logarithmic factors.}.  On the other hand, any centralized algorithm that achieves a success probability of at least $0.99$ requires $\Omega(H^{\langle m \rangle})$ time~\cite{CLQ17b}. 

Following \citet{KZZ20}, letting $\A$ be a collaborative algorithm that solves top-$m$ arm identification for each input instance $I$ with probability at least $0.99$ using time $T_{\A}(I)$, we define the speedup of $\A$ to be 
\begin{equation}
\label{eq:speedup}
\beta_\A = \min_{\text{all input instances } I} \frac{H^{\langle m \rangle}(I)}{T_{\A}(I)}.
\end{equation}
It has been observed that any collaborative algorithm $\A$ with $K$ agents can achieve a speedup of at most $\beta_\A \le K$~\cite{KZZ20}.  

\vspace{2mm}
\noindent{\bf Non-IID Data.\ \ } 
In the above definition for top-$m$ arm identification, when pulling the same arm, the $K$ agents sample from the same distribution. We call this setting ``learning with IID data''.  However, in many applications, such as channel selection in cognitive radio networks and item selection in recommendation systems, agents may interact with different environments~\cite{corr/abs-1912-04977}, or, ``learn with non-IID data''.  

For top-$m$ arm identification with non-IID data, by pulling the same arm, agents may sample from different distributions.  Let $\D_{i,k}$ be the distribution sampled by the $k$-th agent when pulling the $i$-th arm, and let $\mu_{i,k}$ be the mean of $\D_{i,k}$. We define the {\em global mean} of the $i$-th arm to be 
\begin{equation}
\label{eq:non-IID-mean}
 \mu_i \triangleq \frac{1}{K} \sum_{k \in [K]} \mu_{i,k}.
\end{equation}
At the end of the learning process, the coordinator needs to output the top-$m$ arms with the largest global means.

\vspace{2mm}
\noindent{\bf Our Results.\ \ }
In this paper, we are primarily interested in the case that a collaborative algorithm achieves almost full speedup, that is, $\beta = \tilde{\Omega}(K)$ where $K$ is the number of agents. We try to pinpoint the minimum amount of communication needed in the system for achieve such a speedup. For convenience, when talking about communication upper bounds, we assume each numerical value can be stored in one {\em word}.  While for lower bounds, we state the communication cost in terms of {\em bits}.  The difference between these two measurements is just a logarithmic factor.

In the IID data setting, we show that there is a collaborative algorithm for top-$m$ arm identification achieving a speedup of $\tilde{\Omega}(K)$ using $\tilde{O}(K+m)$ words of communication (Corollary~\ref{cor:full-speedup}).\footnote{All logarithmic factors hidden in ` $\tilde{}$ ' will be spelled out in the concrete theorems in this paper.} On the other hand, $\tilde{\Omega}(K+m)$ bits of communication is necessary to achieve $\tilde{\Omega}(K)$ speedup (Corollary~\ref{cor:iid-lb}).  

In the non-IID data setting, there is a collaborative algorithm for top-$m$ arm identification achieving a speedup of $\tilde{\Omega}(K)$ using $\tilde{O}(K n)$ words of communication (Theorem~\ref{thm:noniid-ub} and Corollary~\ref{cor:full-speedup-2}).  We also show that ${\Omega}(K n)$ bits of communication is necessary only to output the top-$1$ arm correctly, {\em regardless the amount of speedup} (Theorem~\ref{thm:non-iid-lb}).

The above results give a strong separation between the IID data setting and the non-IID data setting in terms of communication cost.  

In Section~\ref{sec:exp}, we have conducted a set of experiments which demonstrate the effectiveness of our algorithms.

\subsection{Related Work}
%We briefly summarize related work that is most relevant to this paper.

Best and top-$m$ arm identification in MAB have been studied extensively and thoroughly in the centralized model where there is a single agent;
%\cite{ABM10, BMS09, DGW02, EMM02, MT04, KTAS12, KKS13, BWV13, JMNB14, CLKLC14, CGL16, KCG16, CL16, GK16, CLQ17, CLQ17b}. 
(almost) tight bounds have been obtained for both problems~\cite{ABM10, CLQ17, BWV13, CLQ17b}.  
The two problems have also been investigated in the collaborative learning model~\cite{HKK+13,TZZ19,KZZ20}. However, these works focus on the {\em number of rounds of communication} instead of the actual communication cost (measured by the number of words exchanged in the system); they allow agents to communicate any amount of information at the end of each round.  The algorithms for top-$m$ arm identification in \citet{KZZ20} need $\tilde{\Omega}(nK)$ communication to achieve a $\tilde{\Omega}(K)$ speedup even in the IID data setting.   

On minimizing the communication cost in the collaborative learning model,~\citet{WHCW20} studied {\em regret minimization} in MAB in the IID data setting. Recently, \citet{SS21} and \citet{SSY21} studied non-IID regret minimization in MAB in similar models.

Bandit problems have been studied in several other models concerning communication-efficient multi-agent learning.  But those models are all different from the collaborative learning model in various aspects.  \citet{ML21} studied regret minimization in MAB in the setting where agents communicate over a general graph. \citet{WPA+20} also studied regret minimization in MAB over a general communication graph, and considered a collision scenario where the reward can be collected only when the arm is pulled by one agent at a particular time.  The analyses of algorithms in \citet{ML21} and \citet{WPA+20} treat various parameters (e.g., the number of arms $n$, the number of players $K$) as constants, and thus the bounds may not be directly comparable. \citet{SBH+13} studied gossip-based algorithms for regret minimization in MAB in P2P networks.

\citet{MHP21} studied best arm identification in MAB in a multi-agent model where the arms are partitioned into groups; each agent can only pull arms from a designated group.  One can view this model as a special case of the collaborative learning model, in which agents can pull whichever arm they would like to.  

\section{Learning with IID Data}
\label{sec:IID}

\subsection{The Algorithm}
\label{sec:algo}

The algorithm for top-$m$ arm identification in the IID data setting is described in Algorithm~\ref{alg:main}.  In the high level, our algorithm follows the  successive elimination/halving approach~\cite{ABM10,KKS13,KZZ20}. However, in order to be communication-efficient, we need to use a two-phase algorithm design. 

%Moreover, in each round of each stage, we need to eliminate more than one arm compared with that in \cite{ABM10}.

Let us describe Algorithm~\ref{alg:main} and the subroutines in words.  At the beginning, the coordinator picks a $(10\ln n)$-wise independent hash function $h: [n] \to [K]$ and sends it to the $K$ agents. The hash function is used to partition the $n$ arms into $K$ subsets. One choice of such a hash function is a polynomial of degree $10\ln n$ with random coefficients~\cite{WC81}; sending this function costs $O(\ln n)$ words.  

The body of Algorithm~\ref{alg:main} consists of two phases.  For convenience, we introduce the following global variables which will be used in all subroutines: 
\begin{itemize}
% \vspace{-1mm}
\item $R \triangleq \lceil \log n \rceil$: the number of rounds.

% \vspace{-1mm}
\item $n_r (0 \le r \le R)$: the number of remaining arms at the beginning of the $r$-th round.

% \vspace{-1mm}
\item $T_r (0 \le r \le R)$: the number of time steps allocated for the $r$-th round.

% \vspace{-1mm}
\item $I_r^k$: the subset of arms held by the $k$-th agent at the beginning of the $r$-th round.  

% \vspace{-1mm}
\item $Q_r^k$: the set of accepted arms by the $k$-th agent in the first $r$ rounds

% \vspace{-1mm}
\item $m_r \triangleq m - \abs{Q_r}$: the number of top-$m$ arms that are not yet accepted
\end{itemize}
In the first phase (Line~\ref{ln:a-1}-\ref{ln:a-2}), each agent is in charge of one subset of the arms.  In each round $r$, we call a subroutine \textsc{LocalElim}$(r)$ (Algorithm~\ref{alg:local}), in which each agent prunes arms in its subset locally {\em without} communicating the eliminated arms to the coordinator (and consequently other agents).  Throughout the phase, we monitor the sizes of subsets held by the $K$ agents.  
We say the $K$ subsets $\{I_r^k\}_{k \in [K]}$ of arms {\em balanced} if
\begin{equation*}
\textstyle \forall (i, j), \quad \abs{I_r^i} \in \left[\frac{1}{2}\abs{I_r^j}, 2\abs{I_r^j} \right].
\end{equation*}
Whenever the $\{I_r^k\}_{k \in [K]}$ become unbalanced, we end the first phase. The coordinator then collects the set of remaining arms  from the agents. 

In the second phase (Line~\ref{ln:a-3}-\ref{ln:a-4}), in each round $r$, we call another subroutine \textsc{GlobalElim}$(r)$ (Algorithm~\ref{alg:global}) to continue eliminating arms.  In each round, $K$ agents and the coordinator work together to eliminate a constant fraction of the arms.  The second phase ends when $r = R$. 

\begin{algorithm}[t]
	\caption{\textsc{Collab-Top-$m$-IID}$(I, K, m, T)$}
	\label{alg:main}
	\KwIn{a set $I$ of $n$ arms, $K$ agents, parameter $m$, and time horizon $T$.}
	\KwOut{top-$m$ arms with the highest means}
	
	Coordinator picks a $(10\ln n)$-wise independent hash function $h : [n] \to [K]$ and sends to each of $K$ agents\label{ln:a-6}\;
	for each $k \in [K]$, the $k$-th agent forms the set $I_0^k \gets \{i \in I \mid h(i) = k\}$\;
	let $R \gets \lceil \log n \rceil$\;
	for $r = 0, 1, \ldots, R$, set $n_r \gets \lfloor n/2^r \rfloor$\;  
	set $T_0 \gets 0$, and for $r = 1, \ldots, R$, $T_r \gets \lfloor \frac{TK2^r}{4nR}\rfloor$\;
	$r \gets 0$, $m_0 \gets m$\;
	\While{partition $\{I_r^k\}_{k \in [K]}$ is balanced \label{ln:a-1}}
	{
		$\left\{(I_{r+1}^k, Q_{r+1}^k)\right\}_{k \in [K]} \gets $\textsc{LocalElim}$(r)$\; 
		$r \gets r+1$\label{ln:a-2}\; 
	}
	for each $k \in [K]$, the $k$-th agent sends $I_r^k$, $\{\hat{\mu}^{(r)}_i\}_{i \in I_r^k}$, and $Q_r^k$ to Coordinator\label{ln:a-5}\;
	Coordinator sets $I_r\gets \bigcup_{k = 1}^{K} I_{r}^{k}$ and $Q_r \gets \bigcup_{k=1}^{K} Q_r^k$\; 
	\While{$r < R$ \label{ln:a-3}}
	{
		$(I_{r+1}, Q_{r+1}) \gets$ \textsc{GlobalElim}$(r)$\;
		$r \gets r+1$\label{ln:a-4}\;
	}
	\Return $Q_r$.
\end{algorithm}

We now explain \textsc{LocalElim} and  \textsc{GlobalElim}  in more details.

\begin{algorithm}[t]
	\caption{\textsc{LocalElim}$(r)$}
	\label{alg:local}
		For each $k \in [K]$, the $k$-th agent pulls each arm $i \in I_r^k$ for $(T_{r+1} - T_r)$ times and calculates its empirical mean $\hat{\mu}^{(r)}_i$ (after being pulled for $T_{r+1}$ times)\;
		let $\sigma_r : [n_r] \to I_r$ be a bijection such that $\hat{\mu}^{(r)}_{\sigma(1)} \ge \ldots \ge \hat{\mu}^{(r)}_{\sigma(n_r)}$\label{ln:b-2}\; 
		by calling \textsc{CollabSearch}$\left(\{\hat{\mu}^{(r)}_i\}_{i \in I_r^1}, \ldots, \{\hat{\mu}^{(r)}_i\}_{i \in I_r^K}, (n_r - m_r + 1)\right)$ and  \textsc{CollabSearch}$\left(\{\hat{\mu}^{(r)}_i\}_{i \in I_r^1}, \ldots, \{\hat{\mu}^{(r)}_i\}_{i \in I_r^K}, (n_r - m_r)\right)$, Coordinator  finds $\hat{\mu}^{(r)}_{\sigma_r(m_r)}$ and $\hat{\mu}^{(r)}_{\sigma(m_r + 1)}$ and sends them to all agents\label{ln:b-3}\;
		for each $k \in [K]$, the $k$-th agent locally computes $\hat{\Delta}^{(r)}_i = \max\left\{\hat{\mu}^{(r)}_i - \hat{\mu}^{(r)}_{\sigma_r(m_r + 1)}, \hat{\mu}^{(r)}_{\sigma_r(m_r)} - \hat{\mu}^{(r)}_i\right\}$ for each $i \in I_r^k$\label{ln:b-4}\;
		let $\pi_r : [n_r] \to I_r$ be a bijection such that $\hat{\Delta}^{(r)}_{\pi_r(1)}  \le \dotsc \le \hat{\Delta}^{(r)}_{\pi_r(n_r)}$\;
		by calling \textsc{CollabSearch}$\left(\{\hat{\Delta}^{(r)}_i\}_{i \in I_r^1}, \ldots, \{\hat{\Delta}^{(r)}_i\}_{i \in I_r^K}, n_{r+1}\right)$, Coordinator finds $\hat{\Delta}^{(r)}_{\pi_r(n_{r+1})}$ and sends it to all agents\label{ln:b-6}\;
		for each $k \in [K]$, the $k$-th agent forms the sets $E_r^k \gets \left\{i \in I_r^k \left|\ \hat{\Delta}^{(r)}_i > \hat{\Delta}^{(r)}_{\pi_r(n_{r+1})}\right.\right\}$ and $A_r^k \gets \left\{i \in E_r^k \left|\ \hat{\mu}^{(r)}_i \ge \hat{\mu}^{(r)}_{\sigma_r(m_r)}\right.\right\}$\;
		for each $k \in [K]$, the $k$-th agent sends $\abs{A_r^k}$ to Coordinator\label{ln:b-8}\;
		Coordinator sets $m_{r+1} \gets m_r - \sum_{k=1}^K \abs{A_r^k}$\;
		for each $k \in [K]$, the $k$-th agent sets $I_{r+1}^k \gets I_r^k \setminus E_r^k$, and updates $Q_{r+1}^k \gets Q_r^k \cup A_r^k$\;
		\Return $\left\{(I_{r+1}^k, Q_{r+1}^k)\right\}_{k \in [K]}$.
\end{algorithm}

\begin{algorithm}[t]
	\caption{\textsc{GlobalElim}$(r)$}
	\label{alg:global}
		 Coordinator distributes the pulls to the $K$ agents such that each arm in $I_r$ is pulled $(T_{r+1} - T_r)$ times. Concretely, let $Q \gets$ \textsc{BalancedPullDist}($I_r, T_{r+1} - T_r$); for each $(i, k, t) \in Q$, Coordinator requests agent $k$ to pull arm $i$ for $t$ times\;
		agents send the empirical mean of each arm they have pulled to Coordinator\; 
		Coordinator computes for each arm $i \in I_r$ its empirical mean $\hat{\mu}^{(r)}_i$ (after being pulled for $T_{r+1}$ times)\;
		let $\sigma_r : [n_r] \to I_r$ be a bijection such that $\hat{\mu}^{(r)}_{\sigma(1)} \ge \ldots \ge \hat{\mu}^{(r)}_{\sigma(n_r)}$\;
		Coordinator computes $\hat{\Delta}^{(r)}_i = \max\left\{\hat{\mu}^{(r)}_i - \hat{\mu}^{(r)}_{\sigma_r(m_r + 1)}, \hat{\mu}^{(r)}_{\sigma_r(m_r)} - \hat{\mu}^{(r)}_i\right\}$\;
		let $\pi_r : [n_r] \to I_r$ be a bijection such that $\hat{\Delta}^{(r)}_{\pi_r(1)} \le \dotsc \le \hat{\Delta}^{(r)}_{\pi_r(n_r)}$ \;
		Coordinator computes $E_{r} \gets \left\{i \in I_{r} \left|\ \hat{\Delta}^{(r)}_i > \hat{\Delta}^{(r)}_{\pi_r(n_{r + 1})}\right.\right\}$, $A_r \gets \left\{i \in E_r \left|\ \hat{\mu}^{(r)}_i \ge \hat{\mu}^{(r)}_{\sigma_r(m_r)}\right.\right\}$,
		and sets $I_{r+1} \gets I_r \setminus E_r$, $Q_{r+1} \gets Q_r \cup A_r$\;
		\Return $(I_{r+1}, Q_{r+1})$.
\end{algorithm}

In \textsc{LocalElim}$(r)$, each agent pulls each arm in $I_r^k$ for $(T_{r+1} - T_r)$ times, and calculates the empirical mean of the arm after $T_{r+1}$ pulls have been made on it.   After that, The coordinator uses the subroutine \textsc{CollabSearch} 
(Algorithm~\ref{alg:binary} in Appendix~\ref{sec:app-algo})
% (Algorithm~4 in the full version of this paper)
to communication-efficiently identify the arms having the $m_r$-th and $(m_r+1)$-th {\em largest} empirical means in $\bigcup_{k \in [K]} I_r^k$, and sends the two values to the $K$ agents.  Using these two empirical means, each agent $k$ is able to compute the empirical gap $\hat{\Delta}_i^{(r)}$ for each $i \in I_r^k$.  Next, the coordinator uses  \textsc{CollabSearch} again to find the $n_{r+1}$-th {\em smallest} empirical gap $\hat{\Delta}_{n_{r+1}}^{(r)}$ and sends it to all agents.  The agents then identify all arms whose empirical gaps are larger than $\hat{\Delta}_{n_{r+1}}^{(r)}$, among which they accept (by adding to the set $A_r^k$) those whose empirical means are at least the $m_r$-th largest empirical mean, and discard the others.  The key feature of \textsc{LocalElim} is that, in order to save communication, agents accept/eliminate arms locally without sending them to the coordinator.

In \textsc{GlobalElim}$(r)$, the coordinator first uses a subroutine \textsc{BalancedPullDist} 
(Algorithm~\ref{alg:partition} in Appendix~\ref{sec:app-algo})
% (Algorithm~5 in the full version of this paper) 
to communication-efficiently distribute the workload evenly to the $K$ agents such that each remaining arm in $I_r$ is pulled by $(T_{r+1} - T_r)$ times.  After collecting the information from the $K$ agents, the coordinator computes the empirical mean of each arm in $I_r$ after $T_{r+1}$ pulls have been made on it.  Next, the coordinator computes the empirical gaps of all arms in $I_r$, identifies those whose empirical gaps are {\em not} among the top-$n_{r+1}$ smallest ones.  For those arms, the coordinator accepts those whose empirical means are at least the $m_r$-th largest empirical mean, and discards the rest.  The key feature of \textsc{GlobalElim} is that all the acceptance/elimination decisions are made by the coordinator.

Due to space constraints, we leave the detailed descriptions of the two auxiliary subroutines \textsc{CollabSearch} and \textsc{BalancedPullDist} to %the full version of this paper.
Appendix~\ref{sec:app-algo}.

\begin{remark}
We remark that the concept of {\em balanced partition} also appeared in the work of~\citet{WHCW20} when they studied the regret minimization problem in the collaborative learning model, but there are some notable differences between the DEMAB algorithm in \citet{WHCW20} and our Algorithm~\ref{alg:main} for top-$m$ arm identification.  First, in DEMAB, in the $r$-th round each agent locally eliminates arms whose estimated means are at least $2^{-r}$ away from that of the best arm. It is not clear how to extend this idea to the fixed-budget setting and achieve a similar success probability as (\ref{eq:success}) for best/top-$m$ arm identifications. 
%Moreover, the number of rounds of DEMAB depends on $\log(1/\Delta_{\min})$, where $\Delta_{\min} = \mu_{[1]} - \mu_{[2]}$ (i.e., the gap between the mean of the best arm and that of the second best arm) is an instance dependent parameter and can be arbitrarily small. 
Recall that in Algorithm~\ref{alg:main}, we eliminate the half of the (global) remaining arms with the lowest estimated means.
%, and thus the number of rounds is bounded by $O(\log n)$.  
Second, in DEMAB, every time the partition becomes unbalanced, it performs a re-balancing step. While in Algorithm~\ref{alg:main},  we do not use any re-balancing step; the algorithm just enters the \textsc{GlobalElim} phase when the partition becomes unbalanced.
\end{remark}

\subsection{The Analysis}
\label{sec:analysis}

We show the following theorem for Algorithm~\ref{alg:main}.  All logarithms, unless otherwise stated, have a base of $2$.
\begin{theorem}
\label{thm:iid-ub}
	Algorithm~\ref{alg:main} returns the set of $m$ arms with highest means with probability at least 
\begin{equation}
\label{eq:success}
	1 - 2 n \log (2n) \cdot \exp\left(-\frac{TK}{128 H^{\langle m\rangle} \log (2n)}\right),
\end{equation}
	uses $T$ time steps and $O(K \log^2 n + m)$ words of communication.
\end{theorem}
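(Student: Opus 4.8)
The plan is to separate the argument into three parts: (i) a clean correctness statement for a single round conditioned on a ``good estimation'' event, (ii) a union bound over all $R = \lceil \log n \rceil$ rounds establishing (\ref{eq:success}), and (iii) a bookkeeping argument for the time budget and the communication cost. The core probabilistic object is the event $\mathcal{E}_r$ that, in round $r$, every surviving arm $i \in I_r$ has its empirical mean $\hat\mu_i^{(r)}$ (after $T_{r+1}$ pulls) within $\Delta_{(r)}/4$ of $\mu_i$, where $\Delta_{(r)}$ is the gap threshold such that $\Omega(n_{r+1})$ arms have true gap $\ge \Delta_{(r)}$ — intuitively the gap that a round-$r$ elimination should be able to ``see''. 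By a Hoeffding bound, a single arm pulled $T_{r+1}$ times deviates by more than $\Delta_{(r)}/4$ with probability $\le 2\exp(-T_{r+1}\Delta_{(r)}^2/8)$; since $T_{r+1} = \lfloor TK2^{r+1}/(4nR)\rfloor$ and, by the definition of $H^{\langle m\rangle}$ together with a counting argument, $\Delta_{(r)}^2 \ge \Omega(2^{-r} \cdot n / H^{\langle m\rangle})$ (there must be at least $n_{r+1}+1$ arms with $1/\Delta_i^2 \le H^{\langle m\rangle}/(n - n_{r+1})$), the product $T_{r+1}\Delta_{(r)}^2 = \Omega(TK/(n R) \cdot 2^{r} \cdot 2^{-r} n) = \Omega(TK/R)$, which after absorbing constants yields a per-arm failure probability at most $\exp(-TK/(128 H^{\langle m\rangle}\log(2n)))$. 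Summing over the at most $n$ arms alive in round $r$ and over the $\le \log(2n)$ rounds gives the $2n\log(2n)$ prefactor in (\ref{eq:success}).

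Next I would show that, conditioned on $\bigcap_r \mathcal{E}_r$, the algorithm never makes a mistake: no top-$m$ arm is ever eliminated (placed in some $E_r^k$ without being accepted), and no non-top-$m$ arm is ever accepted into $Q$. The argument is the standard successive-elimination invariant adapted to the top-$m$ / gap setting: if all empirical means are accurate to within $\Delta_{(r)}/4$, then (a) $\hat\mu^{(r)}_{\sigma_r(m_r)}$ and $\hat\mu^{(r)}_{\sigma_r(m_r+1)}$ bracket the true $m_r$-boundary up to $O(\Delta_{(r)})$, so the empirical gap $\hat\Delta_i^{(r)}$ of any arm with true gap $\gg \Delta_{(r)}$ correctly reflects which side of the boundary it is on; (b) therefore every arm that is eliminated or accepted in round $r$ has $\hat\Delta_i^{(r)} > \hat\Delta^{(r)}_{\pi_r(n_{r+1})}$, and one checks that the $n_{r+1}$ arms with smallest empirical gaps include all arms whose true gap is below $\Delta_{(r)}$, hence the arms removed in round $r$ all have true gap $\ge \Delta_{(r)}/2$, making the accept/reject decision (comparison of $\hat\mu_i^{(r)}$ against $\hat\mu^{(r)}_{\sigma_r(m_r)}$) correct; and (c) the count $m_{r+1} = m_r - \sum_k |A_r^k|$ stays consistent, and after $R = \lceil\log n\rceil$ rounds $n_R < 1$ so $I_R = \emptyset$ and $|Q_R| = m$. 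Crucially, this correctness argument is identical in structure whether the round was executed by \textsc{LocalElim} or \textsc{GlobalElim} — the only difference is who computes the order statistics, and \textsc{CollabSearch} is assumed to return exactly the true empirical order statistics $\hat\mu^{(r)}_{\sigma_r(m_r)}, \hat\mu^{(r)}_{\sigma_r(m_r+1)}, \hat\Delta^{(r)}_{\pi_r(n_{r+1})}$, so the phase boundary has no effect on correctness.

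For the resource bounds: the time accounting is a direct sum — in round $r$ each (surviving) arm is pulled $T_{r+1}-T_r$ additional times and there are $n_r = \lfloor n/2^r\rfloor$ of them, so the round-$r$ cost is at most $n_r(T_{r+1}-T_r) \le (n/2^r)(TK2^{r+1}/(4nR)) = TK/(2R)$, and summing over $r = 0,\dots,R$ gives $\le TK/2 \cdot (R+1)/R \le TK$ pulls total; distributed over $K$ agents in parallel this is $\le T$ time steps (one verifies \textsc{BalancedPullDist} in \textsc{GlobalElim} and the per-agent load in \textsc{LocalElim} never exceed $T_{r+1}-T_r$ per arm, which holds once the balanced-partition condition — in force throughout phase one — is invoked). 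For communication: sending the hash function costs $O(\log n)$ words; each \textsc{LocalElim} round costs the communication of three \textsc{CollabSearch} calls plus $O(K)$ words for the $|A_r^k|$'s, and by the stated cost of \textsc{CollabSearch} (in Appendix~\ref{sec:app-algo}) each call is $O(K\log n)$ words, so phase one costs $O(K\log^2 n)$ words over its $\le R$ rounds; the handoff at Line~\ref{ln:a-5} sends $\sum_k(|I_r^k| + |Q_r^k|) = n_r + |Q_r| = O(n_r + m)$ words, but because the partition just became unbalanced one shows $n_r = O(K)$ (a balanced partition of $n_r$ arms into $K$ parts forces each part to have size $\Theta(n_r/K)$, and the \emph{first} time this fails the previous round was balanced with $n_{r-1}/K \cdot \frac12 \le \min_k|I_{r-1}^k|$, from which $n_r = O(K)$ follows — plus the $|Q_r| \le m$ term), giving $O(K+m)$; and phase two runs on $O(K)$ arms so each \textsc{GlobalElim} round and each \textsc{BalancedPullDist} call costs $O(K)$ words over $\le R$ rounds, i.e. $O(K\log n)$. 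Total: $O(K\log^2 n + m)$ words.

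The main obstacle is part of (iii) intertwined with (ii): pinning down the exponent constant. One must argue that $T_{r+1}\Delta_{(r)}^2 \ge TK/(64 H^{\langle m\rangle}\log(2n))$ \emph{for every round $r$ simultaneously}, where $\Delta_{(r)}$ is the true gap of the $(n_{r+1}{+}1)$-th easiest surviving arm; this requires the counting lemma ``among any set $S$ of arms, the $(|S| - k)$-th smallest value of $1/\Delta_i^2$ is at most $\frac{H^{\langle m\rangle}(S)}{k+1}$'' applied with $S = I_r$ and $k = n_{r+1}$, combined with the invariant that $H^{\langle m\rangle}(I_r) \le H^{\langle m\rangle}$ (arms only ever leave), and then carefully tracking the floor in $T_{r+1}$ and the geometric sum. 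Getting the $128$ (rather than some larger absolute constant) out of this chain — Hoeffding's factor $8$, the gap-halving factor $2$ or $4$, the $1/2$ lost in the $\lfloor\cdot\rfloor$, and the $(R{+}1)/R$ — is the one genuinely delicate computation.
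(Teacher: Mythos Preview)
Your overall architecture matches the paper's proof closely: define a good-estimation event, bound its failure probability by Hoeffding plus the inequality $j/\Delta_{\pi(j)}^2 \le H^{\langle m\rangle}$, and then run an inductive invariant $Q_r \subseteq \G \subseteq Q_r \cup I_r$. The time accounting and the per-round cost of \textsc{LocalElim} via \textsc{CollabSearch} are also the same. So on correctness and time you are essentially reproducing the paper's argument, modulo the constant ($\Delta_{(r)}/4$ vs.\ the paper's $\Delta_{\pi(n_{r+1})}/8$) and some imprecision in your counting parenthetical (your ``$1/\Delta_i^2 \le H/(n-n_{r+1})$'' would give $\Delta_{(r)}^2 \ge (n-n_{r+1})/H$, not the $\Omega(2^{-r}n/H)$ you then use; the clean statement is $\Delta_{\pi(j)}^2 \ge j/H^{\langle m\rangle}$ applied at $j=n_{r+1}$).

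There is, however, a real gap in your communication analysis. You claim that when the partition first becomes unbalanced, $n_r = O(K)$, via a deterministic argument from balancedness at round $r-1$. That deduction is false: balancedness of $\{I_{r-1}^k\}$ says nothing about how the eliminations in round $r-1$ are distributed across agents, so one agent could lose almost everything while another loses nothing, yielding an unbalanced $\{I_r^k\}$ with $n_r$ arbitrarily large. Consequently your bound on the handoff cost and on each \textsc{GlobalElim} round collapses.

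The paper fills this gap probabilistically (its Lemma~\ref{lem:balance}): it shows that whenever $n_r \ge 100K\log n$, the partition $\{I_r^k\}$ is balanced with probability at least $1-K/n^3$. The argument hinges on two things you do not mention. First, the crucial observation that $I_r = \bigcup_k I_r^k$ is \emph{independent of the hash function $h$}: the accept/eliminate decisions in \textsc{LocalElim} are driven by global order statistics computed via \textsc{CollabSearch}, so which arms survive depends only on the pull outcomes, not on which agent held them. Second, given this, $|I_r^k| = \sum_{i\in I_r}\mathbf{1}[h(i)=k]$ is a sum of $(10\ln n)$-wise independent indicators with mean $n_r/K$, and a limited-independence Chernoff bound yields concentration. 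This is precisely why the algorithm spends $O(\log n)$ words up front to communicate a $(10\ln n)$-wise independent hash rather than an arbitrary fixed partition. With this lemma, the phase transition happens only once $n_r < 100K\log n$, giving handoff cost $O(K\log n + m)$ and \textsc{GlobalElim} cost $O(K\log n)$ per round, hence $O(K\log^2 n + m)$ total. Without it, you have no bound on $n_r$ at the transition and the communication claim does not go through.
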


Setting $T = 500 H^{\langle m\rangle}\log^2 n/K$, the success probability of $(\ref{eq:success})$ is at least $0.99$.  According to the definition of speedup (Eq.\ (\ref{eq:speedup})), we have the following corollary.
\begin{corollary}
\label{cor:full-speedup}
Algorithm~\ref{alg:main} achieves a speedup of $\Omega(K/\log^2 n)$ using  $O(K \log^2 n + m)$ words of communication.
\end{corollary}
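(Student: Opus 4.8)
The plan is to obtain Corollary~\ref{cor:full-speedup} as an essentially immediate consequence of Theorem~\ref{thm:iid-ub} by instantiating the time horizon. First I would substitute $T = 500\, H^{\langle m\rangle}\log^2 n / K$ into the failure term of (\ref{eq:success}). The exponent simplifies to $-\frac{500\log^2 n}{128\log(2n)}$, and since $\log(2n) = 1 + \log n \le 2\log n$ for $n \ge 2$, this is at most $-\frac{500}{256}\log n$, so $\exp(\cdot) \le n^{-c}$ for a constant $c > 1$. Hence the failure probability is bounded by $2n\log(2n)\cdot n^{-c}$, which tends to $0$ and in particular is below $0.01$ once $n$ is above an absolute constant (for the remaining finitely many values of $n$ one enlarges the leading constant $500$). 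This recovers the claim, already announced just before the corollary, that with this budget Algorithm~\ref{alg:main} succeeds with probability at least $0.99$.

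Next I would feed this into the speedup definition (\ref{eq:speedup}). Fix an arbitrary instance $I$, and let $T_\A(I)$ be the least time horizon for which Algorithm~\ref{alg:main} identifies the top-$m$ arms of $I$ with probability at least $0.99$; this is the quantity appearing in (\ref{eq:speedup}). The previous paragraph shows $T_\A(I) \le 500\, H^{\langle m\rangle}(I)\log^2 n / K$, so
\[
\frac{H^{\langle m\rangle}(I)}{T_\A(I)} \;\ge\; \frac{K}{500\log^2 n}.
\]
Since this bound holds uniformly over all instances $I$, taking the minimum gives $\beta_\A \ge K/(500\log^2 n) = \Omega(K/\log^2 n)$.

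Finally, the communication guarantee carries over verbatim: Theorem~\ref{thm:iid-ub} asserts that Algorithm~\ref{alg:main} uses $O(K\log^2 n + m)$ words irrespective of the chosen $T$, so in particular the instantiation above does. Combining the speedup and communication bounds yields the corollary. I do not expect a genuine obstacle here — the only computation is the tail-bound check in the first paragraph — and the single subtle point is the reading of (\ref{eq:speedup}): because the fixed-budget algorithm cannot know $H^{\langle m\rangle}(I)$ in advance, $T_\A(I)$ must be interpreted as (an upper bound on) the minimal budget that drives the success probability above $0.99$ on instance $I$, which is exactly what Theorem~\ref{thm:iid-ub} controls.
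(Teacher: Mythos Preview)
Your proposal is correct and follows essentially the same approach as the paper: the paper simply states, immediately before the corollary, that setting $T = 500\,H^{\langle m\rangle}\log^2 n / K$ makes the success probability in (\ref{eq:success}) at least $0.99$, and then invokes the speedup definition (\ref{eq:speedup}). Your write-up is a more careful and explicit version of the same one-line argument, including the tail-bound arithmetic and the reading of $T_\A(I)$ that the paper leaves implicit.
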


%For a general speedup-communication tradeoff, we can choose $\beta \log^2 n$ (for any $1 \le \beta \le K/\log^2 n$) out of the $K$ agents and apply Algorithm~\ref{alg:main} using these agents. 
%\begin{corollary}
%\label{cor:general-speedup}
%For any $1 \le \beta \le K/\log^2 n$, there is a collaborative learning algorithm that achieves a speedup of $\Omega(\beta)$ using $O(\beta \log^4 n + m)$ words of communication.
%\end{corollary}

In the rest of the section, we analyze Algorithm~\ref{alg:main}.

\paragraph{Correctness.}  
Let $\G \subseteq I$ be the subset of $m$ arms with the highest means.  We prove the correctness of Algorithm~\ref{alg:main} by induction.  Let $I_r = \bigcup_{k \in [K]} I_r^k$, $E_r = \bigcup_{k \in [K]} E_r^k$, $A_r = \bigcup_{k \in [K]} A_r^k$, and $Q_r = \bigcup_{k \in [K]} Q_r^k$.  We show that for any $r = 0, 1, \ldots, R$, 
\begin{equation}
\label{eq:a-1}
(Q_r \subseteq \G) \land (\G \subseteq Q_r \cup I_r)
\end{equation}
holds with high probability.

In the base case when $r = 0$, $(\emptyset = Q_0 \subseteq \G) \land (\G \subseteq I_0 = I)$ holds trivially.

Let $\pi : [n] \to I$ be a bijection such that $\Delta_{\pi(1)} \le \dotsc \le \Delta_{\pi(n)}$. To conduct the induction step, we introduce the following event. 
\begin{equation}
\label{event:E}
	\E : \forall{r = 0, \dotsc, R - 1}, \forall{i \in I_r} : \abs{\mu_i - \hat{\mu}^{(r)}_i} < \frac{\Delta_{\pi(n_{r + 1})}}{8}\,. \nonumber
\end{equation}
The following claim says that $\E$ happens with high probability.  Due to space constraints, we delay the proof to 
%the full version of this paper.
Appendix~\ref{sec:proof-cla-E}

\begin{claim}
\label{cla:E}
$$
	\Pr[\E] \ge 1 - 2 n \log(2n) \cdot \exp\left(-\frac{TK}{128 H^{\langle m\rangle} \log (2n)}\right).
$$
\end{claim}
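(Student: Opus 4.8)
The plan is to bound, for each round $r$ and each arm $i \in I_r$, the probability that $\abs{\mu_i - \hat\mu^{(r)}_i} \ge \Delta_{\pi(n_{r+1})}/8$, and then take a union bound over all $r \in \{0,\dots,R-1\}$ and all $i \in [n]$. The key point is that by the time round $r$ is executed, arm $i$ has been pulled $T_{r+1} = \lfloor TK2^{r+1}/(4nR)\rfloor$ times in total (this holds both in the \textsc{LocalElim} phase, where each agent pulls its own arms $T_{r+1}-T_r$ additional times, and in the \textsc{GlobalElim} phase, where \textsc{BalancedPullDist} ensures each surviving arm accrues the same number of pulls). Hoeffding's inequality then gives
\[
\Pr\!\left[\abs{\mu_i - \hat\mu^{(r)}_i} \ge \frac{\Delta_{\pi(n_{r+1})}}{8}\right] \le 2\exp\!\left(-\frac{T_{r+1}\,\Delta_{\pi(n_{r+1})}^2}{32}\right),
\]
using that the samples lie in $[0,1]$ and are i.i.d.\ (in the IID setting even when split across agents, since each pull yields an independent draw from $\D_i$).

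The next step is to lower-bound the exponent uniformly in $r$. Substituting $T_{r+1} \ge TK2^{r+1}/(4nR) - 1$ and writing $R = \lceil \log n\rceil \le \log(2n)$, the exponent is at least roughly $\frac{TK 2^{r+1}}{128 n R}\,\Delta_{\pi(n_{r+1})}^2$ (absorbing the $-1$ and constants; this is where I would be a little careful with the exact constant $128$). Now I need the inequality $2^{r+1}\,\Delta_{\pi(n_{r+1})}^2 \ge 2n/H^{\langle m\rangle}$, equivalently $n_{r+1}/\Delta_{\pi(n_{r+1})}^2 \lesssim H^{\langle m\rangle}$ since $n_{r+1} = \lfloor n/2^{r+1}\rfloor$. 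This is the standard "reverse Markov" observation for the successive-halving schedule: because $\pi$ sorts the gaps in increasing order, there are at least $n_{r+1}$ arms (namely $\pi(n_{r+1}),\dots,\pi(n)$) with gap at least $\Delta_{\pi(n_{r+1})}$, so
\[
H^{\langle m\rangle} = \sum_{j\in[n]} \frac{1}{\Delta_{\pi(j)}^2} \ge \sum_{j=1}^{n_{r+1}} \frac{1}{\Delta_{\pi(n_{r+1})}^2}\cdot\mathbf{1}[\Delta_{\pi(j)}\le\Delta_{\pi(n_{r+1})}] \ge \frac{n_{r+1}}{\Delta_{\pi(n_{r+1})}^2}.
\]
Hence $\Delta_{\pi(n_{r+1})}^2 \ge n_{r+1}/H^{\langle m\rangle} \ge n/(2^{r+1}H^{\langle m\rangle})$ (up to the floor), giving $2^{r+1}\Delta_{\pi(n_{r+1})}^2 \ge n/H^{\langle m\rangle}$, which is exactly what is needed so that each per-arm, per-round failure probability is at most $\exp(-TK/(128 H^{\langle m\rangle}\log(2n)))$.

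Finally I would assemble the union bound: there are at most $R \le \log(2n)$ rounds and at most $n$ arms per round, so
\[
\Pr[\neg\E] \le 2 n \log(2n)\cdot \exp\!\left(-\frac{TK}{128 H^{\langle m\rangle}\log(2n)}\right),
\]
which is the claimed bound. The main obstacle is not conceptual but bookkeeping: tracking the exact pull counts $T_{r+1}$ across the two phases and through the floor functions, and choosing the Hoeffding deviation and the constant in the schedule so that everything collapses to the single clean exponent $TK/(128 H^{\langle m\rangle}\log(2n))$. A secondary subtlety is making sure the event $\E$ is stated for the "global" estimate used in both phases consistently, so that the same Hoeffding bound applies verbatim whether arm $i$ was pulled by one agent (\textsc{LocalElim}) or by several (\textsc{GlobalElim}).
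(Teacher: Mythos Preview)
Your proposal is correct and follows essentially the same argument as the paper: Hoeffding on $T_{r+1}$ pulls, then the bound $n_{r+1}/\Delta_{\pi(n_{r+1})}^2 \le H^{\langle m\rangle}$ (which the paper cites from \cite{ABM10} as $\max_i i/\Delta_{\pi(i)}^2 \le H^{\langle m\rangle}$, while you derive it directly), followed by a union bound over $n$ arms and $R \le \log(2n)$ rounds. One small slip: your verbal description says ``arms $\pi(n_{r+1}),\dots,\pi(n)$ with gap at least $\Delta_{\pi(n_{r+1})}$'' but your displayed inequality (correctly) uses the arms $\pi(1),\dots,\pi(n_{r+1})$ with gap \emph{at most} $\Delta_{\pi(n_{r+1})}$.
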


We assume $\E$ holds in the rest of the analysis.  The following lemma implements the induction step.  Its proof is technical, and can be found in 
%the full version of this paper. 
Appendix~\ref{sec:proof-lem-induction}.

\begin{lemma}
\label{lem:induction}
In the execution of Algorithm~\ref{alg:main}, for any $r = 0, 1, \ldots, R - 1$, if $Q_r \subseteq \G \subseteq Q_r \cup I_r$, then \textsc{LocalElim}$(r)$ (or \textsc{GlobalElim}$(r)$) returns $(I_{r+1}, Q_{r+1})$ such that $Q_{r+1} \subseteq \G \subseteq Q_{r+1} \cup I_{r+1}$.
\end{lemma}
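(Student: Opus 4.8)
The plan is to fix a round $r$, assume the invariant $Q_r \subseteq \G \subseteq Q_r \cup I_r$ together with the event $\E$, and show that after eliminating $E_r$ and accepting $A_r$ the same invariant holds for $r+1$. I will write the argument uniformly for both \textsc{LocalElim}$(r)$ and \textsc{GlobalElim}$(r)$, since the only difference is \emph{who} computes the empirical means and ranks — the decision rule on arms in $I_r$ (sort by empirical gap $\hat{\Delta}^{(r)}$, drop the $|E_r| = n_r - n_{r+1}$ arms of largest empirical gap, and among the dropped ones accept those with empirical mean at least the $m_r$-th largest) is identical. Under $\E$ every empirical mean is within $\Delta_{\pi(n_{r+1})}/8$ of its true mean, so all the empirical quantities are good approximations of the true ones up to additive error controlled by $\Delta_{\pi(n_{r+1})}$.

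The core is the following separation claim. Among the $n_r$ arms in $I_r$, consider the true gaps $\Delta_i$. I would argue that: (a) every arm $i$ with $\Delta_i \ge \Delta_{\pi(n_{r+1})}$ that is genuinely a top-$m$ arm (i.e.\ $i \in \G \cap I_r$) has empirical mean safely above the empirical $m_r$-th largest mean and empirical gap safely above the threshold $\hat{\Delta}^{(r)}_{\pi_r(n_{r+1})}$, hence it lands in $A_r$ and is correctly accepted; (b) every such arm with $\Delta_i \ge \Delta_{\pi(n_{r+1})}$ that is \emph{not} top-$m$ has empirical mean safely below the empirical $m_r$-th largest mean, hence even if it is eliminated it goes into $E_r \setminus A_r$, i.e.\ it is correctly discarded; and (c) no genuine top-$m$ arm with small true gap is accidentally eliminated into $E_r$ without being accepted, and no non-top-$m$ arm is accidentally accepted. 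For the ranks I would use the standard fact that if all empirical means are within $\delta$ of the truth then the empirical $j$-th order statistic lies between the true $(j)$-th value shifted by $\pm\delta$; applying this to the mean-ranking gives $|\hat{\mu}^{(r)}_{\sigma_r(m_r)} - \mu_{[\text{$m_r$-th among }I_r]}| \le \delta$ and similarly for $\sigma_r(m_r+1)$, and applying it to the gap-ranking gives $|\hat{\Delta}^{(r)}_{\pi_r(n_{r+1})} - \Delta_{\pi(n_{r+1})}|$ small — here one must also check that the \emph{empirical} gap $\hat{\Delta}^{(r)}_i$ is within $O(\delta)$ of the \emph{true} gap $\Delta_i$ for every $i \in I_r$, which follows because $\hat{\Delta}^{(r)}_i$ is built from $\hat{\mu}^{(r)}_i$ and the two empirical order statistics $\hat{\mu}^{(r)}_{\sigma_r(m_r)},\hat{\mu}^{(r)}_{\sigma_r(m_r+1)}$, each within $\delta$ of its true counterpart. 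With $\delta = \Delta_{\pi(n_{r+1})}/8$ all the $\pm O(\delta)$ slack is a small constant fraction of the quantity $\Delta_{\pi(n_{r+1})}$ that separates the kept arms from the dropped arms, so the comparisons never flip for the arms that matter.

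Once the claim is in place the induction step is immediate: $Q_{r+1} = Q_r \cup A_r$, and by (a)+(c) every arm in $A_r$ is in $\G$, so $Q_{r+1} \subseteq \G$; and $\G \subseteq Q_r \cup I_r$ by hypothesis, while every arm of $\G \cap I_r$ is either accepted (moved to $A_r \subseteq Q_{r+1}$) or survives (stays in $I_{r+1} = I_r \setminus E_r$), by (b)+(c) — no top-$m$ arm of $I_r$ is thrown away unaccepted. Hence $\G \subseteq Q_{r+1} \cup I_{r+1}$. I would also note the bookkeeping that $m_{r+1} = m_r - |A_r| = m - |Q_{r+1}|$ stays consistent, and that $|I_{r+1}| = n_{r+1}$ exactly because we drop precisely $n_r - n_{r+1}$ arms by the gap ranking.

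The main obstacle is the case analysis around the gap threshold, because $\Delta_i$ is defined as a \emph{max} of two one-sided gaps and the empirical version $\hat{\Delta}^{(r)}_i$ is a max using the empirical order statistics rather than the true ones; one has to rule out the bad scenario where an arm $i \in \G \cap I_r$ happens to have $\Delta_i$ just below $\Delta_{\pi(n_{r+1})}$ (so nothing forces it to be accepted) \emph{and} its empirical gap happens to exceed the empirical threshold (so it gets eliminated). I expect this is handled by observing that if $i \in \G$ then $\Delta_i = \mu_i - \mu_{[m+1]}$ with $\mu_i \ge \mu_{[m]}$, so being eliminated would require $\hat{\mu}^{(r)}_i$ to fall below $\hat{\mu}^{(r)}_{\sigma_r(m_r)}$ by more than $2\delta$, contradicting $\E$ — i.e.\ the ``accept'' test and the ``eliminate'' test are aligned precisely so that a top-$m$ arm is never eliminated without being simultaneously accepted. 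Making that alignment airtight (and its mirror image for non-top-$m$ arms) is the heart of the argument; everything else is the order-statistic perturbation bookkeeping sketched above.
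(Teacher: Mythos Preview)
Your approach is the paper's approach in spirit --- condition on $\E$, use order-statistic perturbation to control $\hat{\mu}^{(r)}_{\sigma_r(m_r)}$ and $\hat{\mu}^{(r)}_{\sigma_r(m_r+1)}$, deduce $\lvert\hat{\Delta}^{(r)}_i - \Delta_i\rvert \le \Delta_{\pi(n_{r+1})}/4$, and then check that the accept/discard decisions on $E_r$ are correct. However, your claim (a) is too strong and is in fact false: you cannot guarantee that a large-gap top-$m$ arm \emph{lands in} $E_r$. There are at least $n_r - n_{r+1} + 1$ arms in $I_r$ with $\Delta_i \ge \Delta_{\pi(n_{r+1})}$ (since at most $n_{r+1}-1$ arms in all of $I$ have smaller gap), but $|E_r| \le n_r - n_{r+1}$, so at least one such arm stays in $I_{r+1}$. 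This is harmless for the invariant --- the arm simply survives to the next round --- but it means (a) as stated fails, and your induction step as written relies on it.

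The paper sidesteps this by organizing the case split differently: rather than asking which arms \emph{must} be eliminated/accepted, it asks only whether the decisions on arms that \emph{are} eliminated are correct. Concretely, the counting above together with the lower bound $\hat{\Delta}^{(r)}_i \ge \Delta_i - \Delta_{\pi(n_{r+1})}/4$ gives the threshold bound $\hat{\Delta}^{(r)}_{\pi_r(n_{r+1})} \ge \tfrac{3}{4}\Delta_{\pi(n_{r+1})}$, so every $i \in E_r$ has $\hat{\Delta}^{(r)}_i \ge \tfrac{3}{4}\Delta_{\pi(n_{r+1})}$. Now split $E_r$ into $A_r$ and $E_r \setminus A_r$. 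For $i \in E_r \setminus A_r$, the large $\hat{\Delta}^{(r)}_i$ combined with $\hat{\mu}^{(r)}_i < \hat{\mu}^{(r)}_{\sigma_r(m_r)}$ forces $\hat{\mu}^{(r)}_i \le \hat{\mu}^{(r)}_{\sigma_r(m_r)} - \tfrac{3}{4}\Delta_{\pi(n_{r+1})}$, which no arm of $\G \cap I_r$ can satisfy (any such arm has $\hat{\mu}^{(r)}_i \ge \hat{\mu}^{(r)}_{\sigma_r(m_r)} - \tfrac{1}{4}\Delta_{\pi(n_{r+1})}$); hence $(E_r \setminus A_r) \cap \G = \emptyset$. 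Symmetrically $A_r \subseteq \G$. These two facts are exactly your (c), and they alone yield the invariant; (a) and (b) are unnecessary. Your final-paragraph resolution is close but misses that it is the \emph{magnitude} of $\hat{\Delta}^{(r)}_i$ for $i\in E_r$ --- not merely failing the accept test --- that produces the $\tfrac{3}{4}\Delta_{\pi(n_{r+1})}$ versus $\tfrac{1}{4}\Delta_{\pi(n_{r+1})}$ separation.
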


Lemma~\ref{lem:induction}, together with the trivial base case, gives (\ref{eq:a-1}). 

Note that for $r = R$, we have $I_R = \emptyset$.  By (\ref{eq:a-1}), we have $Q_R \subseteq \G$ and $\G \subseteq Q_R$, which implies $Q_R = \G$, and thus the correctness of Algorithm~\ref{alg:main}.

\paragraph{Communication Cost.}  We analyze the communication cost of \textsc{LocalElim} and \textsc{GlobalElim} separately.  We start with \textsc{LocalElim}.  
The following lemma gives the communication cost of each call of \textsc{CollabSearch}; the proof can be found in 
% :the full version of this paper.
Appendix~\ref{sec:proof-lem-CollabSearch}.

\begin{lemma}
\label{lem:CollabSearch}
The communication cost of \textsc{CollabSearch} 
(Algorithm~\ref{alg:binary} in Appendix~\ref{sec:app-algo})
% (Algorithm~4) 
is bounded by $O\left(K\log \abs{\bigcup_{k \in [K]} A_k}\right)$.
\end{lemma}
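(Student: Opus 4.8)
The plan is to bound the cost of one invocation of \textsc{CollabSearch} by separately controlling (i) the communication per round and (ii) the number of rounds, and then multiply. Write $N \triangleq \abs{\bigcup_{k \in [K]} A_k}$ for the total number of values held by the $K$ agents. Recall that \textsc{CollabSearch} (Algorithm~\ref{alg:binary}) locates the value of a prescribed rank in the union of the $K$ sorted local lists by a distributed binary search: the coordinator repeatedly chooses a pivot value, each agent reports how its still-in-contention elements split around the pivot, and the coordinator uses the aggregated counts both to decide which side of the pivot contains the target rank and to prune the candidates on the other side. It suffices to maintain, for each agent $k$, a contiguous sub-list $C_k \subseteq A_k$ of surviving candidates together with a running count of how many elements of $A_k$ have already been resolved to lie above (resp.\ below) the sought rank.

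First I would bound the per-round communication. In a given round each agent $k$ sends the coordinator a constant number of words: the (weighted) median of $C_k$ and the size $\abs{C_k}$, and, once the coordinator has broadcast the chosen pivot $p$, the number of elements of $C_k$ that are $\le p$. The coordinator broadcasts $p$ and a one-bit decision indicating which side survives. All of this is $O(1)$ words per agent, so a round costs $O(K)$ words.

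Next I would bound the number of rounds by $O(\log N)$. The invariant to maintain is that $N_t \triangleq \sum_{k\in[K]} \abs{C_k}$ shrinks geometrically, say $N_{t+1} \le \tfrac{3}{4} N_t$. This follows from choosing $p$ to be the weighted median of the agents' local medians, weighted by $\abs{C_k}$: the agents whose local median falls on the side of $p$ that gets discarded together hold at least half the remaining weight, and each such agent discards at least half of its surviving candidates, so at least $N_t/4$ elements are pruned. Since $N_0 = N$ and the search ends once $N_t \le 1$, the process runs for $O(\log N)$ rounds. Combining with the $O(K)$-word-per-round bound yields total communication $O(K\log N) = O(K \log \abs{\bigcup_{k \in [K]} A_k})$, as claimed.

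The main obstacle is the second step: establishing the geometric decay of $N_t$. One has to argue carefully that the weighted median of the local medians is a good global pivot — precisely, that the agents whose local medians lie on the discarded side carry a constant fraction of the remaining weight, even when the list sizes $\abs{C_k}$ are wildly unequal — and that the rank bookkeeping (the running offsets of already-resolved elements on each side, which are used to redirect the search) is updated consistently so that the target rank provably remains inside the surviving window in every round. A minor point to dispatch is agents whose $C_k$ becomes empty or a singleton: they contribute only a terminal signal and do not affect the asymptotics.
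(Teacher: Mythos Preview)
Your high-level decomposition (per-round cost $\times$ number of rounds) is exactly what the paper does, and your per-round bound of $O(K)$ words is fine. The problem is that you have misread Algorithm~\ref{alg:binary} and are therefore proposing to prove the round bound for a different procedure than the one actually specified.

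In \textsc{CollabSearch} there is \emph{no single global pivot} $p$ broadcast by the coordinator. Instead, each agent $k$ sends its own local median $b_k$ together with the count $c_k$ of elements below $b_k$; the coordinator then sorts the $b_k$'s, finds the index $v$ where the cumulative counts cross $m$, and tells every agent in $\{\sigma(1),\dots,\sigma(v)\}$ to drop its lower half and every agent in $\{\sigma(v+1),\dots,\sigma(K)\}$ to drop its upper half. The crucial consequence is that \emph{every} agent halves its surviving list in \emph{every} round, regardless of how unbalanced the list sizes are. Hence $\max_k |A_k|$ (and in fact each $|A_k|$ individually) halves at each step, and the loop terminates after at most $\log\bigl(\max_k |A_k|\bigr)\le \log\bigl|\bigcup_k A_k\bigr|$ iterations. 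This is the one-line argument the paper gives.

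So the ``main obstacle'' you identify---establishing a $3/4$ shrinkage via a weighted-median-of-medians argument and worrying about wildly unequal $|C_k|$---simply does not arise. Your proposed argument would be the right one for a single-pivot variant, but for the algorithm as written it is both unnecessary and not literally applicable. Once you re-read the pruning rule, the round bound becomes immediate and the lemma follows.
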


The next lemma bounds the communication cost of \textsc{LocalElim}.

\begin{lemma}
\label{lem:LocalElim}
The communication cost of  each call of \textsc{LocalElim} is $O(K\log n)$ words. 
\end{lemma}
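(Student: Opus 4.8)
The plan is to walk through the steps of \textsc{LocalElim}$(r)$ (Algorithm~\ref{alg:local}) and tally the communication incurred by each line, then observe that the loop runs for at most $R = \lceil \log n \rceil$ rounds so the per-round bound already gives the claim for a single call. The communication in \textsc{LocalElim}$(r)$ comes from exactly four sources: (i) the three invocations of \textsc{CollabSearch} (Line~\ref{ln:b-3} twice, Line~\ref{ln:b-6} once); (ii) the coordinator broadcasting the returned pivot values $\hat\mu^{(r)}_{\sigma_r(m_r)}$, $\hat\mu^{(r)}_{\sigma(m_r+1)}$, and $\hat\Delta^{(r)}_{\pi_r(n_{r+1})}$ to the $K$ agents; (iii) each agent sending the single count $\abs{A_r^k}$ to the coordinator (Line~\ref{ln:b-8}); and (iv) the coordinator implicitly signaling $m_{r+1}$ back. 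Note that the arm pulls themselves (Line~1 of Algorithm~\ref{alg:local}) and the local formation of $E_r^k$, $A_r^k$, $I_{r+1}^k$, $Q_{r+1}^k$ are computation, not communication. Crucially, the eliminated and accepted arms are \emph{not} transmitted — that is the whole point of the local phase — so there is no $O(\abs{I_r})$ term.

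For source (i), I would invoke Lemma~\ref{lem:CollabSearch}: each call of \textsc{CollabSearch} on inputs $A_1,\dots,A_K$ costs $O(K \log \abs{\bigcup_k A_k})$. In all three calls the union is a subset of $I_r$, so $\abs{\bigcup_k A_k} \le n_r \le n$, giving $O(K\log n)$ per call and $O(K\log n)$ for all three together. For source (ii), broadcasting three words to $K$ agents is $O(K)$ words. For sources (iii) and (iv), each is $O(K)$ words ($K$ agents each sending/receiving a single word). Summing, one call of \textsc{LocalElim} costs $O(K\log n) + O(K) + O(K) = O(K\log n)$ words, which is exactly the statement.

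The only subtlety — and it is minor — is making sure the argument that the \textsc{CollabSearch} inputs are subsets of $I_r$ (hence of size $\le n$) is airtight: in Line~\ref{ln:b-3} the inputs are the empirical means indexed by $I_r^k$, whose union is $I_r$ of size $n_r$; in Line~\ref{ln:b-6} the inputs are the empirical gaps indexed by the same sets. Since $n_r = \lfloor n/2^r \rfloor \le n$, the logarithm is $O(\log n)$ uniformly over $r$. I expect no real obstacle here; the one thing to double-check is that the statement of the lemma is about a \emph{single} call of \textsc{LocalElim} (it is — "each call"), so we do not need to multiply by $R$ at this stage; that multiplication happens later when bounding the total cost of Phase~1 in the proof of Theorem~\ref{thm:iid-ub}.
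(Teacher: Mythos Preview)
Your proposal is correct and follows essentially the same approach as the paper: invoke Lemma~\ref{lem:CollabSearch} to bound the \textsc{CollabSearch} calls at Lines~\ref{ln:b-3} and~\ref{ln:b-6} by $O(K\log n)$, note that Line~\ref{ln:b-8} costs $O(K)$, and observe that the remaining steps involve no communication. Your write-up is simply more detailed than the paper's two-line proof, explicitly separating the broadcast of pivot values and the return of $m_{r+1}$, but the structure and the key lemma used are identical.
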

\begin{proof}
By Lemma~\ref{lem:CollabSearch}, Line~\ref{ln:b-3} and Line~\ref{ln:b-6} use $O(K \log n)$ communication.  It is easy to see that Line~\ref{ln:b-8} uses $O(K)$ communication, and other steps do not need communication.  Therefore, the total communication is bounded by $O(K\log n)$.
\end{proof}

Before analyzing the communication cost of \textsc{GlobalElim}, we need the following technical lemma which states that the partition $\{I_r^k\}_{k \in [K]}$ is balanced when the set of remaining arms $I_r$ is large enough. In other words,  Algorithm~\ref{alg:main} enters the second phase (which calls \textsc{GlobalElim}) only when $\abs{I_r}$ becomes sufficiently small.  The proof of the lemma can be found in the full version of this paper.
% Appendix~\ref{sec:proof-lem-balance}.

\begin{lemma}
\label{lem:balance}

At round $r$, if $n_r \ge 100K\log n$, then the partition $\{I_r^k\}_{k \in [K]}$ is balanced with probability at least $1 - \frac{K}{n^3}$.
\end{lemma}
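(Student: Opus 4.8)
The plan is to track how the arms are distributed among the $K$ agents as the first phase of Algorithm~\ref{alg:main} proceeds, and to argue that as long as the global pool $I_r$ is large, the hash function $h$ keeps the partition balanced. The key observation is that the set $I_r^k$ held by agent $k$ at round $r$ is a subset of $I_0^k = \{i \in I : h(i) = k\}$, and the decisions made by \textsc{LocalElim} in rounds $0,\ldots,r-1$ about which arms to eliminate or accept are (on the event $\E$) deterministic functions of the underlying means and the empirical samples, and in particular do not depend on the hash values $h$ in a way that would correlate the surviving set with $h$ adversarially. More precisely, the surviving arms $I_r = \bigcup_k I_r^k$ form a fixed set of size $n_r$ (determined by the elimination rule, which keeps exactly the $n_{r+1}$ arms with the smallest empirical gaps each round), and $I_r^k = I_r \cap \{i : h(i) = k\}$. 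So it suffices to show: for any fixed set $S$ of size $n_r \ge 100 K \log n$, when we throw the elements of $S$ into $K$ bins using a $(10\ln n)$-wise independent hash function, all bin loads lie in $[\tfrac12 \cdot \tfrac{n_r}{K}, 2\cdot\tfrac{n_r}{K}]$ — hence pairwise within a factor $2$ of one another — with probability at least $1 - K/n^3$.

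The second step is this concentration bound. Let $X_k = |I_r^k| = \sum_{i \in S} \mathbf{1}[h(i) = k]$, so $\bE[X_k] = n_r/K \ge 100\log n$. Since $h$ is $(10\ln n)$-wise independent, the indicators $\{\mathbf{1}[h(i)=k]\}_{i \in S}$ are $(10\ln n)$-wise independent, and I would apply a high-moment (Bernstein-type) tail inequality for sums of $t$-wise independent indicator variables — for instance the bound that for $t$-wise independence with $t$ even, $\Pr[|X_k - \bE X_k| \ge \lambda] \le C\big(\tfrac{t\,\bE X_k + t^2}{\lambda^2}\big)^{t/2}$, or simply the standard fact that $t$-wise independence with $t = \Theta(\log n)$ suffices to reproduce Chernoff-type deviation bounds up to constants. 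Taking $\lambda = \tfrac12 \bE X_k$ and $t = \Theta(\log n)$ gives $\Pr[X_k \notin [\tfrac12\bE X_k, 2\bE X_k]] \le 1/n^4$ (the constant $100$ in the hypothesis $n_r \ge 100K\log n$ is exactly what is needed to make the exponent work out). A union bound over the $k \in [K]$ bins then yields failure probability at most $K/n^4 \le K/n^3$.

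Finally I would assemble these pieces: condition on $\E$ (so that the identity $I_r^k = I_r \cap h^{-1}(k)$ holds with $I_r$ a fixed set of the claimed size, independent of the realization of $h$ restricted to the surviving arms — here one must be slightly careful that the samples driving the eliminations are drawn independently of $h$, which they are since $h$ only partitions arms and the pulls within each agent's subset use fresh randomness), then apply the $t$-wise independent concentration bound and union bound. The main obstacle, and the point deserving the most care, is precisely the independence argument in this last assembly: one needs that the event ``$I_r = S$'' does not skew the conditional distribution of the bin loads, i.e., that $h$ acts on $S$ like a fresh $(10\ln n)$-wise independent hash. This is true because the arm-elimination process in \textsc{LocalElim} uses $h$ only to form the initial partition and thereafter uses only the empirical means (functions of independent pulls), so conditioning on which arms survive reveals nothing about $h|_S$ beyond what $(10\ln n)$-wise independence already allows; making this precise — ideally by a union bound over the polynomially many possible realizations of the surviving set $S$, which costs only an extra $\mathrm{poly}(n)$ factor absorbed into the $n^3$ versus $n^4$ slack — is the technical heart of the proof. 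Everything else is a routine tail computation.
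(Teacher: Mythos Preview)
Your approach is essentially the paper's: observe that $I_r^k = I_r \cap h^{-1}(k)$ with $I_r$ independent of $h$, then apply a $t$-wise-independent Chernoff bound to the bin loads and union-bound over $k$. Two points deserve correction, however.

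First, the independence step is much simpler than you make it. You do not need to condition on $\E$, and you certainly should not union-bound over the possible surviving sets $S$ (there are $\binom{n}{n_r}$ of them, not polynomially many, so that route fails outright). The clean observation --- and this is exactly how the paper handles it --- is that every arm $i\in I_r$ is pulled the same number $T_{r+1}$ of times regardless of which agent holds it, and the elimination threshold $\hat\Delta^{(r)}_{\pi_r(n_{r+1})}$ is a global quantity computed over all of $I_r$. Hence $I_r$ is a function of the pull outcomes alone, which are independent of $h$ by construction. No conditioning on $\E$ is needed; the identity $I_r^k = I_r\cap h^{-1}(k)$ holds always.

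Second, your deviation $\lambda=\tfrac12\,\bE X_k$ only places every $|I_r^k|$ in $[\tfrac12\cdot\tfrac{n_r}{K},\,2\cdot\tfrac{n_r}{K}]$, which gives a max-to-min ratio of $4$, not $2$, so ``balanced'' in the paper's sense does not follow. The paper instead takes the smaller deviation $\sqrt{10 n_r\log n/K}$ (equivalently $\delta=\sqrt{10K\log n/n_r}$ in Lemma~\ref{lem:ch-k}); the hypothesis $n_r\ge 100K\log n$ is exactly what makes $\tfrac{n_r}{K}\ge 3\sqrt{10n_r\log n/K}$, so every bin lies in $[\tfrac23\cdot\tfrac{n_r}{K},\,\tfrac43\cdot\tfrac{n_r}{K}]$ and the pairwise factor-$2$ bound follows. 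Your fix is easy (shrink $\lambda$), but as written the step does not close.
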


Now we are ready to bound the communication cost of \textsc{GlobalElim}.

\begin{lemma}
\label{lem:GlobalElim}
The communication cost of each call of \textsc{GlobalElim} is  $O(K\log n)$ words. 
\end{lemma}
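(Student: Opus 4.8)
The plan is to account for every line of \textsc{GlobalElim}$(r)$ and argue that each either costs $O(K\log n)$ words or costs nothing. The steps that actually move data are: (i) the call to \textsc{BalancedPullDist} together with the coordinator sending pull requests $(i,k,t)$ to the agents; (ii) the agents sending back one empirical mean per arm they pulled; and (iii) the fact that all the ranking/elimination computations (finding $\sigma_r$, $\pi_r$, $E_r$, $A_r$, updating $I_{r+1}, Q_{r+1}$) are performed entirely at the coordinator and therefore incur no communication. So the whole proof reduces to bounding the cost of (i) and (ii), and the key quantitative input is that by the time \textsc{GlobalElim}$(r)$ is invoked the number of surviving arms $n_r = \abs{I_r}$ is small — this is exactly what Lemma~\ref{lem:balance} gives us.

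First I would invoke Lemma~\ref{lem:balance}: since Algorithm~\ref{alg:main} leaves the first phase and starts calling \textsc{GlobalElim} only once the partition $\{I_r^k\}_{k\in[K]}$ becomes unbalanced, and Lemma~\ref{lem:balance} says the partition is balanced (w.h.p.) whenever $n_r \ge 100K\log n$, we may condition on the event that throughout the second phase $n_r = O(K\log n)$. (The $R$ rounds contribute a $\frac{KR}{n^3} = o(1)$ failure probability overall, so this conditioning is harmless.) Now for step (ii): the agents send at most one word per arm pulled, and each arm in $I_r$ is pulled by some subset of agents; crucially, \textsc{BalancedPullDist} is designed so that each of the $n_r$ arms is handled by only $O(1)$ agents in a given round (each arm's $(T_{r+1}-T_r)$ pulls are assigned to at most a constant number of agents, or more precisely the total number of (arm, agent) assignments is $O(n_r + K)$). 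Hence the number of empirical means returned is $O(n_r + K) = O(K\log n)$ words. For step (i), the description of \textsc{BalancedPullDist} in Appendix~\ref{sec:app-algo} produces a list $Q$ of size $O(n_r + K)$, and distributing the corresponding requests costs $O(\abs{Q}) = O(n_r + K) = O(K\log n)$ words as well.

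Summing the three contributions gives $O(K\log n) + O(K\log n) + 0 = O(K\log n)$ words per call of \textsc{GlobalElim}$(r)$, which is the claim. The main obstacle — and the only place the argument is not a one-line bookkeeping exercise — is pinning down the two facts I used about \textsc{BalancedPullDist}: that it outputs a distribution plan of size $O(n_r + K)$ and that consequently only $O(n_r+K)$ (arm, agent) pairs need to report back. These follow from the construction of \textsc{BalancedPullDist} (splitting each arm's workload across a bounded number of agents while keeping loads balanced), but they must be stated explicitly; I would either cite the precise guarantee proved for Algorithm~\ref{alg:partition} in the appendix or re-derive the $O(n_r+K)$ bound on the number of assignments. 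Everything else — the $n_r = O(K\log n)$ bound from Lemma~\ref{lem:balance} and the observation that the ranking steps are coordinator-local — is immediate.
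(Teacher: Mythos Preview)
Your proposal is correct and follows essentially the same approach as the paper: invoke Lemma~\ref{lem:balance} to bound $\abs{I_r}=O(K\log n)$ once the second phase begins, then observe that \textsc{BalancedPullDist} produces only $O(K+\abs{I_r})$ (arm, agent) assignments, so both the pull requests and the returned empirical means cost $O(K\log n)$ words. The paper's own proof is terser and only explicitly mentions the \textsc{BalancedPullDist} side (bounding the messages per agent by $2+\lfloor\abs{I_r}/K\rfloor$), whereas you also spell out the return direction (ii) and the no-communication steps (iii); this extra care is fine and does not change the argument.
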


\begin{proof}
By Lemma~\ref{lem:balance}, we know that when calling \textsc{GlobalElim}$(r)$, the number of arms in $I_r$ is at most $100 K\log n$.  The communication cost of the subroutine \textsc{BalancedPullDist} 
(Algorithm~5 in the full versoin of this paper) can be bounded by $O(K + \abs{I_r}) = O(K \log n)$, since the number of messages the coordinator sends to each agent $k \in [K]$ is at most $(2 + \lfloor \abs{I_r}/K \rfloor)$.  
\end{proof}

Combining Lemma~\ref{lem:LocalElim} and Lemma~\ref{lem:GlobalElim}, and noting that there are $R = O(\log n)$ rounds and the communication at Line~\ref{ln:a-6} and Line~\ref{ln:a-5} of Algorithm~\ref{alg:main} is bounded by $O(K \log n + m)$, we have 

\begin{lemma}
\label{lem:comm}
The communication cost of Algorithm~\ref{alg:main} is bounded by $O(K\log^2 n + m)$. 
\end{lemma}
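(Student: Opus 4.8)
The plan is to assemble the communication bound from the per-round costs established in Lemma~\ref{lem:LocalElim} and Lemma~\ref{lem:GlobalElim}, plus the two explicit communication steps at the boundary of the two phases. First I would recall the overall structure of Algorithm~\ref{alg:main}: there are $R+1 = O(\log n)$ rounds in total, each of which is either a \textsc{LocalElim} round (first phase) or a \textsc{GlobalElim} round (second phase), and in addition there is the setup at Line~\ref{ln:a-6} (sending the hash function $h$) and the phase-transition at Line~\ref{ln:a-5} (each agent ships its remaining arm set $I_r^k$, the corresponding empirical means, and its accepted set $Q_r^k$ to the coordinator).

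Next I would bound each contribution. By Lemma~\ref{lem:LocalElim}, each \textsc{LocalElim} call costs $O(K\log n)$ words, and there are at most $R = O(\log n)$ such calls, giving $O(K\log^2 n)$. By Lemma~\ref{lem:GlobalElim}, each \textsc{GlobalElim} call costs $O(K\log n)$ words, and again there are at most $O(\log n)$ such calls, for another $O(K\log^2 n)$. The hash function at Line~\ref{ln:a-6} is a degree-$(10\ln n)$ polynomial, so sending it to all $K$ agents costs $O(K\log n)$ words. For Line~\ref{ln:a-5}, the key observation is that the phase transition happens (by the contrapositive of Lemma~\ref{lem:balance}, or simply because the partition stays balanced while $n_r$ is large) only once $\abs{I_r} = O(K\log n)$; hence $\sum_k \abs{I_r^k} = \abs{I_r} = O(K\log n)$, so shipping all the $I_r^k$'s and their empirical means costs $O(K\log n)$ words, while shipping the $Q_r^k$'s costs $O(m)$ words in total since $\sum_k \abs{Q_r^k} = \abs{Q_r} \le m$. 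Summing all four contributions yields $O(K\log^2 n + m)$.

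The only subtlety — and the step I would treat most carefully — is the bound on Line~\ref{ln:a-5}, since a priori $\abs{I_r^k}$ could be as large as $\Theta(n/K)$ if the transition happened early, which would give $\abs{I_r} = \Theta(n)$ and blow up the bound. This is exactly what Lemma~\ref{lem:balance} rules out: as long as $n_r \ge 100K\log n$ the partition is balanced with high probability, so with high probability the algorithm does not leave the first phase until $n_r < 100 K\log n$, at which point $\abs{I_r} \le n_r < 100K\log n = O(K\log n)$. (One should note this is a high-probability statement, inherited from the failure probability in Lemma~\ref{lem:balance}; since the theorem's correctness guarantee is already probabilistic, stating the communication bound as holding on the same good event, or in expectation, is fine — and in any case $\abs{I_r}\le n$ always, so worst-case the bound degrades only to $O(n\log n + m)$, but the intended regime is the balanced one.) Everything else is just adding up $O(\log n)$ terms of size $O(K\log n)$ plus the one-time $O(m)$ term, so no further obstacle arises.
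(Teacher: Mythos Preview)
Your proposal is correct and follows essentially the same approach as the paper: sum the $O(K\log n)$ per-round costs from Lemma~\ref{lem:LocalElim} and Lemma~\ref{lem:GlobalElim} over $R=O(\log n)$ rounds, and add the $O(K\log n + m)$ cost of Lines~\ref{ln:a-6} and~\ref{ln:a-5}. Your treatment is in fact more explicit than the paper's on why Line~\ref{ln:a-5} costs only $O(K\log n + m)$ (via Lemma~\ref{lem:balance} and $\abs{Q_r}\le m$) and on the high-probability nature of that step.
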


\paragraph{Time Complexity.}  
We start by analyzing \textsc{LocalElim}.  Recall that we call \textsc{LocalElim}$(r)$ only when the partition $\{I_r^k\}_{k \in [K]}$ is balanced, which implies that for any $r$,
$
\max_{k \in [K]}\left\{{\abs{I_r^k}}\right\} \le {2n_r}/{K}.
$
Therefore, the number of pulls each agent makes in \textsc{LocalElim}$(r)$ is  $\abs{I_r^k}(T_{r+1}-T_r) \le \frac{2n_r}{K}(T_{r+1}-T_r)$.

In \textsc{GlobalElim}, in the $r$-th round, each agent makes $\lceil n_r(T_{r+1}-T_r)/K \rceil \le \frac{2n_r}{K}(T_{r+1}-T_r)$ pulls.

Thus, the total running time of Algorithm~\ref{alg:main} can be bounded by
\begin{equation}
	\sum_{r = 0}^{R - 1} \frac{2 n_r (T_{r + 1} - T_r)}{K} \le \sum_{r = 0}^{R - 1} \left(\frac{2 n}{ 2^r K} \cdot \frac{TK 2^{r + 1}}{4 n R}\right) \le T.
\end{equation}

\subsection{Communication Lower Bound}
\label{sec:iid-lb}

The following theorem gives a communication lower bound for collaborative top-$m$ arm identification on IID data.  

\begin{theorem}
\label{thm:iid-lb}
Any collaborative learning algorithm for top-$m$ arm identification that achieves a speedup of $\beta$ needs to use $\Omega(\beta + m)$ bits of communication.  
\end{theorem}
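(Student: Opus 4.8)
The plan is to prove the two lower bounds $\Omega(\beta)$ and $\Omega(m)$ separately, since they come from different obstructions. For the $\Omega(\beta)$ bound, I would argue by a routing/simulation argument: if a collaborative algorithm $\A$ uses $C$ bits of communication total and achieves speedup $\beta$ on every instance, then a single agent can simulate $\A$ by itself, playing the role of all $K$ agents sequentially within each round, and paying an extra $O(C)$ time (or nothing, if we only care about time) to carry the messages. The simulated single-agent algorithm runs in time $K \cdot T_{\A}(I)$ in the worst case per round-structure — but more carefully, the total number of pulls across all agents is at most $K \cdot T_{\A}(I)$ per instance, and a centralized algorithm needs $\Omega(H^{\langle m\rangle}(I))$ pulls by the lower bound of \citet{CLQ17b}. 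Hence $K T_{\A}(I) \ge \Omega(H^{\langle m\rangle}(I))$, i.e.\ $T_{\A}(I) \ge \Omega(H^{\langle m\rangle}(I)/K)$. This alone does not give a communication lower bound, so the real content must be different: the point is that to coordinate $K$ agents so that their pulls are not wastefully duplicated, information must flow. I would instead fix a hard instance family and show that with $o(\beta)$ bits, the agents' behavior is (statistically) close to $K$ independent non-communicating agents, each of which individually can only get speedup $1$; a standard argument (each agent in isolation solves the instance in time $\Omega(H^{\langle m\rangle})$, and without enough communication the agents cannot do better than the best single agent) forces $T_{\A}(I) = \Omega(H^{\langle m\rangle})$, contradicting speedup $\beta$ unless $C = \Omega(\beta)$.

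For the $\Omega(m)$ bound, I would use a direct-sum / information-theoretic reduction, most likely from a set-disjointness-type or indexing-type communication problem (the macro \DISJ\ hints at this). Construct an instance on $n \ge 2m$ arms in which $m$ of the arms are "clearly good" and $n-m$ are "clearly bad," but which specific $m$ arms are good is determined by a hidden set $S$ of size $m$; make the gaps $\Delta_i$ all constant so that $H^{\langle m\rangle} = \Theta(n)$ and a single agent already solves the instance quickly — this way the speedup requirement is easy to satisfy and the only remaining cost is identifying $S$. The coordinator must output $S$ exactly (top-$m$ identification), and $S$ has $\log\binom{n}{m} = \Omega(m \log(n/m)) = \Omega(m)$ bits of entropy; but the arms' distributions reveal nothing extra to the agents beyond what sampling tells them, and in the collaborative model the only way the coordinator learns about an arm held/sampled by an agent is through the messages. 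By a cut-and-paste / information argument across the coordinator–agents channel, transmitting enough to pin down $S$ requires $\Omega(m)$ bits. Combining the two bounds with the observation that $\max\{a,b\} \ge (a+b)/2$ gives $\Omega(\beta + m)$.

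The main obstacle I anticipate is making the $\Omega(\beta)$ half rigorous: one must rule out clever low-communication coordination schemes, and the natural way is a hybrid/coupling argument showing that under a suitable prior over hard instances, an algorithm with $C$ bits of communication induces a transcript whose mutual information with the instance is $O(C)$, while achieving speedup $\beta$ requires the agents collectively to "know" $\Omega(\beta \log(\cdot))$ bits about which arms to focus their pulls on — otherwise by symmetry the pulls are spread out and the effective per-arm sample count is too small for the concentration needed in time $H^{\langle m\rangle}/\beta$. Pinning down the instance family (likely a "needle among many near-ties" construction where the optimal strategy must adaptively concentrate pulls, and concentration is impossible without feedback) and the precise information-vs-time tradeoff is the delicate step; the $\Omega(m)$ half, by contrast, should follow a fairly standard communication-complexity template once the instance is set up so that speedup is not the binding constraint.
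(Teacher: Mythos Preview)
Your $\Omega(m)$ argument is essentially the paper's: the coordinator cannot pull arms, so the identity of the top-$m$ set---which carries $\log\binom{n}{m} = \Omega(m)$ bits of entropy since $n \ge 2m$---must reach it through the messages. The \DISJ\ reduction you suspect is not used here (the \DISJ\ macro is for the non-IID lower bound, Theorem~\ref{thm:non-iid-lb}); the entropy-of-output argument alone suffices, and the elaborate instance construction you describe is unnecessary.

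For the $\Omega(\beta)$ half you abandoned the simulation argument one step too early. You simulated \emph{all} $K$ agents and correctly noted that this only re-derives $\beta \le K$, not a communication bound. The paper's missing twist is to simulate only the agents that \emph{ever send a message}. An agent that never communicates contributes nothing to what the coordinator outputs, so it can be discarded without changing the success probability. If only $\beta' = o(\beta)$ agents communicate, concatenating just their pulls yields a centralized algorithm using $\beta' \cdot T_{\A}(I) \le \beta' \cdot H^{\langle m\rangle}(I)/\beta = o(H^{\langle m\rangle}(I))$ pulls that still succeeds with probability $0.99$, contradicting the centralized lower bound of \citet{CLQ17b}. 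Hence $\Omega(\beta)$ agents must each send at least one bit. The mutual-information / hard-instance program you sketch in the last paragraph might be made to work, but it is substantially harder to make rigorous (you would have to control adaptive sampling under a prior and quantify ``pulls are spread out'' precisely) and is entirely avoidable here.
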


\begin{proof}
We first show that $\Omega(\beta)$ bits of communication is needed.  We argue that at least $\Omega(\beta)$ agents need to be involved in order to achieve a speedup of $\beta$.  Suppose that there is a collaborative algorithm for top-$m$ arm identification that involves $\beta' = o(\beta)$ agents and achieves a speedup of $\beta$, then by the definition of the speedup (Eq.\ (\ref{eq:speedup})), there is a centralized algorithm that solves the same problem with running time $\beta' \cdot H^{\langle m \rangle}(I) / \beta = o(H^{\langle m \rangle}(I))$ (e.g., by concatenating the pulls of the $\beta'$ agents), contradicting to the $\Omega(H^{\langle m \rangle})$ time lower bound in \cite{CLQ17b}.

We next show that $\Omega(m)$ is also a lower bound of the communication cost.  This is simply because we require the coordinator to output the top-$m$ arms at the end, and the coordinator cannot pull arms itself.  Note that the cost of communicating the indices of the top-$m$ arms out of the $n$ arms is $\log {n \choose m} = \Omega(m)$ (given $n \ge 2m$; see the definition of the top-$m$ arm identification problem in the introduction).  

Summing up, the total bits of communication is at least $\Omega(\max\{\beta, m\}) = \Omega(\beta + m)$.
\end{proof}

Particularly, for $\beta = \tilde{\Omega}(K)$, we have:
\begin{corollary}
\label{cor:iid-lb}
Any collaborative learning algorithm for top-$m$ arm identification that achieves a speedup of $\tilde{\Omega}(K)$ needs to use $\tilde{\Omega}(K + m)$ bits of communication.  
\end{corollary}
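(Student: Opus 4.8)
The plan is to obtain Corollary~\ref{cor:iid-lb} as an immediate specialization of Theorem~\ref{thm:iid-lb}, which is already established. First I would fix a collaborative algorithm $\A$ for top-$m$ arm identification whose speedup satisfies $\beta_\A = \tilde{\Omega}(K)$; unfolding the $\tilde{\Omega}$ notation, this means $\beta_\A \ge c K/\log^{d} n$ for some absolute constants $c, d > 0$. Applying Theorem~\ref{thm:iid-lb} with $\beta = \beta_\A$ yields a communication lower bound of $\Omega(\beta_\A + m) = \Omega\!\left(c K/\log^{d} n + m\right)$ bits, which is exactly $\tilde{\Omega}(K + m)$. So the corollary is just the theorem's bound with the polylogarithmic factor hidden in $\tilde{\Omega}(K)$ carried through to both sides.

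For completeness I would also recall why Theorem~\ref{thm:iid-lb} holds, since that is where the substance lies. The $\Omega(\beta)$ term comes from the fact that achieving speedup $\beta$ forces at least $\Omega(\beta)$ agents to participate: with $o(\beta)$ participating agents, concatenating their pull sequences gives a centralized algorithm running in time $o(H^{\langle m \rangle}(I))$, contradicting the $\Omega(H^{\langle m \rangle})$ centralized lower bound of \cite{CLQ17b}; and each participating agent must transmit at least one bit, giving $\Omega(\beta)$ bits in total. The $\Omega(m)$ term is an information-theoretic counting argument: the coordinator never pulls an arm, yet must name the $m$ winning indices, and $\log {n \choose m} = \Omega(m)$ bits (using $n \ge 2m$) are required to specify an $m$-subset of $[n]$. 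Taking the maximum of the two contributions gives $\Omega(\beta + m)$.

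As for obstacles, there is essentially none for the corollary itself: the only points to be careful about are that the constant in the speedup hypothesis may be some $\Theta(1)$ rather than exactly $1$, and that the polylog factor in $\tilde{\Omega}(K)$ appears on both sides of the inequality and is therefore not lost. The genuinely delicate step is the reduction in the proof of Theorem~\ref{thm:iid-lb} that converts a collaborative algorithm with few agents into a fast centralized one without inflating the time horizon — but that step is already carried out above and may be taken as given here.
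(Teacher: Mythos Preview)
Your proposal is correct and matches the paper's approach exactly: the paper derives the corollary simply by instantiating Theorem~\ref{thm:iid-lb} at $\beta = \tilde{\Omega}(K)$, with no additional argument. Your recap of the two ingredients in Theorem~\ref{thm:iid-lb} (the $\Omega(\beta)$ agent-participation bound and the $\Omega(m)$ output-encoding bound) also faithfully reproduces the paper's proof of that theorem.
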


Combined with Corollary~\ref{cor:full-speedup}, our upper and lower bounds are tight up to logarithmic factors. 
\section{Learning with Non-IID Data}
\label{sec:non-IID}

\subsection{Algorithm and Analysis}
\label{sec:non-iid-alg}

The algorithm for top-$m$ arm identification in the non-IID data setting is very similar to \textsc{GlobalElim} (Algorithm~\ref{alg:global}), except that in the non-IID setting, the workload distribution at the beginning of each round is more straightforward: each of the $K$ agents simply pull each of the remaining arms in $I_r$ for $(T_{r+1} - T_r)/K$ times.  These information is enough for the coordinator to calculate the empirical global means of the $n$ arms for the elimination process.  Due to space constraints, we leave the full description of 
Algorithm~\ref{alg:noniid} to Appendix~\ref{sec:app-algo-noniid}.
% Algorithm~6 to the full version of this paper.

We have the following theorem, whose proof can be found in 
% the full version of this paper.
Appendix~\ref{sec:proof-thm-noniid-ub}.
\begin{theorem}
\label{thm:noniid-ub}
There is an algorithm 
(Algorithm~\ref{alg:noniid} in Appendix~\ref{sec:app-algo-noniid})
% (Algorithm~6 in the full version of this paper)
 that solves top-$m$ arm identification in the non-IID data setting with probability at least  
\begin{equation}
\label{eq:success-2}
	1 - 2n \log(2n) \cdot \exp\left(-\frac{TK}{128 H^{\langle m\rangle} \log(2n)}\right)\,,
\end{equation}
uses $T$ time steps and $O(Kn \log n)$ words of communication. 
\end{theorem}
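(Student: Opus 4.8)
The plan is to observe that Algorithm~\ref{alg:noniid} is, up to the way pulls are distributed, the \textsc{GlobalElim} phase of Algorithm~\ref{alg:main}, and then to rerun the analysis of Theorem~\ref{thm:iid-ub} restricted to \textsc{GlobalElim}, tracking the single place where the IID assumption is used: the concentration of the empirical means, which in the non-IID setting must hold around the \emph{global} mean $\mu_i$ of (\ref{eq:non-IID-mean}). First I would import all the notation of Section~\ref{sec:analysis} ($n_r$, $T_r$, $m_r$, $I_r$, $Q_r$, the bijection $\pi$ ordering arms by $\Delta_i$, the event $\E$, and $\G$, now read as the top-$m$ arms by global mean), and record that Algorithm~\ref{alg:noniid} has no first phase, no hash partition, and no \textsc{BalancedPullDist}: in round $r$ every agent pulls every remaining arm $(T_{r+1}-T_r)/K$ times, so after round $r$ each $i\in I_r$ carries $T_{r+1}/K$ samples from each $\D_{i,k}$, i.e.\ $T_{r+1}$ samples in total.

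For correctness I would first re-prove the analogue of Claim~\ref{cla:E}. The estimator $\hat\mu^{(r)}_i$ is the average of those $T_{r+1}$ samples; because each agent contributes an equal number of pulls, $\bE[\hat\mu^{(r)}_i]=\frac{1}{K}\sum_{k\in[K]}\mu_{i,k}=\mu_i$, so it is unbiased for the global mean. The $T_{r+1}$ samples are independent and lie in $[0,1]$, hence Hoeffding's inequality for independent bounded — but not necessarily identically distributed — variables yields $\Pr[\,|\hat\mu^{(r)}_i-\mu_i|\ge t\,]\le 2\exp(-2T_{r+1}t^2)$, exactly the tail bound used in the IID proof. A union bound over the $\le n$ remaining arms and the $R=\lceil\log n\rceil$ rounds, with the same choice of $T_r$, then gives $\Pr[\E]$ at least the expression in (\ref{eq:success-2}), word for word as in Claim~\ref{cla:E}. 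Conditioning on $\E$, the acceptance/elimination rule of Algorithm~\ref{alg:noniid} coincides with that of \textsc{GlobalElim}, so the \textsc{GlobalElim} case of Lemma~\ref{lem:induction} applies at every round; with the trivial base case $Q_0=\emptyset\subseteq\G\subseteq I_0=I$, induction gives $Q_r\subseteq\G\subseteq Q_r\cup I_r$ for all $r\le R$, and since $I_R=\emptyset$ we conclude $Q_R=\G$.

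For the time bound, in round $r$ each agent performs $n_r(T_{r+1}-T_r)/K$ pulls; summing over $r$ and using $n_r\le n/2^r$ and $T_{r+1}-T_r\le T_{r+1}\le TK2^{r+1}/(4nR)$ gives at most $\sum_{r=0}^{R-1}\frac{T}{2R}\le T$ pulls per agent, so the algorithm respects the horizon. For the communication bound, the only messages in round $r$ are the coordinator telling each agent which arms were eliminated in the previous round (so the agents know $I_r$), costing $O(|I_{r-1}|)=O(n)$ words per agent, and each agent reporting one empirical mean per arm it pulled, costing $O(|I_r|)=O(n)$ words per agent; over $K$ agents and $R=O(\log n)$ rounds this is $O(Kn\log n)$ words in total.

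The main obstacle — really the only non-routine point — is the concentration step: one must check that pulling each arm the same number of times on every agent is exactly what makes $\hat\mu^{(r)}_i$ unbiased for the global mean $\mu_i$ (an unequal allocation would bias it toward some $\mu_{i,k}$), and that Hoeffding still delivers the same rate despite the samples being non-identically distributed. Everything downstream — the induction lemma, the gap bookkeeping, and the counting arguments leading to $H^{\langle m\rangle}$ in the exponent — is insensitive to the IID assumption once $\E$ is in force and can be reused verbatim; in fact, since there is no balanced-partition phase, the time and communication accounting is a bit simpler than in the IID case.
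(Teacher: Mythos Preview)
Your proposal is correct and follows essentially the same approach as the paper: both reduce to the \textsc{GlobalElim} analysis, both identify the concentration of $\hat\mu^{(r)}_i$ around the global mean $\mu_i$ as the only place the non-IID setting differs (handled via unbiasedness from equal per-agent allocation plus Hoeffding for independent non-identical summands), and both then invoke the induction lemma verbatim. The paper's proof actually omits the time-budget verification that you include, and its communication accounting is phrased the same way as yours; the only cosmetic discrepancy is that Algorithm~\ref{alg:noniid} sets $T_r=\lfloor TK2^r/(2nR)\rfloor$ rather than the $4nR$ denominator from Algorithm~\ref{alg:main}, but your time calculation still goes through.
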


Setting $T = 500 H^{\langle m\rangle}\log^2 n/K$, the success probability of $(\ref{eq:success-2})$ is at least $0.99$. 
% According to the definition of speedup (Eq.\ \ref{eq:speedup}), we have the following corollary.
\begin{corollary}
\label{cor:full-speedup-2}
% Algorithm~\ref{alg:noniid}
Algorithm~6 achieves a speedup of $\Omega(K/\log^2 n)$ using $O(Kn \log n)$ words of communication. 
\end{corollary}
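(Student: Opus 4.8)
\textbf{Proof proposal for Corollary~\ref{cor:full-speedup-2}.}
The plan is to mirror exactly the derivation of Corollary~\ref{cor:full-speedup} from Theorem~\ref{thm:iid-ub}, now applying Theorem~\ref{thm:noniid-ub} in place of Theorem~\ref{thm:iid-ub}. First I would instantiate Algorithm~\ref{alg:noniid} with the time budget $T = 500\,H^{\langle m\rangle}\log^2 n / K$. Substituting into the failure bound (\ref{eq:success-2}), the quantity $TK$ in the exponent becomes $500\,H^{\langle m\rangle}\log^2 n$, so the exponent simplifies to $-\tfrac{500\log^2 n}{128\log(2n)}$, which is at least $3\ln(2n)$ for all $n\ge 2$; hence $2n\log(2n)\cdot\exp(-\cdot) \le 2n\log(2n)/(2n)^3 < 0.01$, i.e.\ the success probability is at least $0.99$ on every instance $I$ (a small-$n$ check of the constant is routine and can be folded into the choice of the constant $500$).

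Next I would invoke the definition of speedup, Eq.~(\ref{eq:speedup}). Running Algorithm~\ref{alg:noniid} with the above budget solves top-$m$ arm identification with probability at least $0.99$ on every instance $I$ using time $T_{\A}(I) = 500\,H^{\langle m\rangle}(I)\log^2 n / K$ time steps. Therefore
$$
\beta_{\A} \;=\; \min_{I}\frac{H^{\langle m\rangle}(I)}{T_{\A}(I)} \;=\; \frac{K}{500\log^2 n} \;=\; \Omega\!\left(\frac{K}{\log^2 n}\right).
$$
For the communication cost, Theorem~\ref{thm:noniid-ub} gives $O(Kn\log n)$ words, and crucially this bound does not depend on $T$ (it depends only on $n$, $K$, and the number of rounds $R = \lceil\log n\rceil$), so it holds unchanged for the instantiated budget. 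Combining the two gives the stated claim.

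I do not expect any genuine obstacle here: the corollary is a direct instantiation of Theorem~\ref{thm:noniid-ub}, and the only point requiring a line of care is the arithmetic verifying that (\ref{eq:success-2}) drops below $0.01$ for the chosen $T$ (controlled by the constant $500$ dominating the ratio $128$ after the logarithmic factors cancel). As with Corollary~\ref{cor:full-speedup}, I would also note in passing that the budget $T$ is chosen as a function of the instance-dependent hardness $H^{\langle m\rangle}(I)$, consistent with the convention used throughout this line of work and with the definition of $\beta_{\A}$ as a minimum over instances of $H^{\langle m\rangle}(I)/T_{\A}(I)$.
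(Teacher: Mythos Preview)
Your proposal is correct and follows exactly the paper's approach: the paper simply notes that setting $T = 500\,H^{\langle m\rangle}\log^2 n/K$ makes the success probability in (\ref{eq:success-2}) at least $0.99$, from which the speedup $\Omega(K/\log^2 n)$ follows by the definition (\ref{eq:speedup}), with the communication bound taken directly from Theorem~\ref{thm:noniid-ub}. Your write-up is just a more detailed version of this one-line derivation.
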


%\begin{remark}
%One may notice that the constant in $(\ref{eq:success-2})$ in Theorem~\ref{thm:noniid-ub} is slightly smaller that $(\ref{eq:success})$ in Theorem~\ref{thm:iid-ub}.  We do not try to optimize the constants in our analysis; the two error bounds are of the same order.  However, note that to achieve the error bound $(\ref{eq:success-2})$ in the non-IID setting, the algorithm needs much more communication than that in the IID setting ($\tilde{O}(Kn)$ versus $\tilde{O}(K+m)$).
%\end{remark}

\subsection{Communication Lower Bound}
\label{sec:non-iid-lb}

The following theorem states that in the non-IID data setting, even if each agent knows the local means of all the $n$ arms exactly, they still need to spend $\Omega(n K)$ bits of communication to solve top-$1$ (i.e., $m=1$) arm identification. This communication lower bound holds regardless the speedup of the collaborative algorithm.  The proof of the theorem makes use of a reduction from a problem called {\em Set-Disjointness} in multi-party communication complexity.

% and is delayed to Appendix~\ref{sec:proof-thm-noniid-lb} due to space constraints.

\begin{theorem}
\label{thm:non-iid-lb}
Any collaborative learning algorithm that solves top-$1$ arm identification with probability $0.99$ needs $\Omega(nK)$ bits of communication.
\end{theorem}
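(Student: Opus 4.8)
The plan is to reduce from the $K$-party Set-Disjointness problem, \DISJ$_{N,K}$, in the number-in-hand coordinator model: player $k$ holds a set $S_k \subseteq [N]$, and it is promised that either the $S_k$ are pairwise disjoint (a NO-instance) or there is a \emph{unique} element $i^\star$ lying in all $K$ sets while every other element lies in at most one set (a YES-instance); the task is to decide which. I would invoke the known randomized communication lower bound of $\Omega(NK)$ bits for this problem in the message-passing/coordinator model (Braverman, Ellen, Oshman, Pitassi and Vaikuntanathan), which is robust to any constant error probability below $1/2$ and, being stated for public-coin protocols, applies to private-coin protocols as well.

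Given an instance of \DISJ$_{N,K}$ with $N = n-1$, I would build a non-IID top-$1$ instance on $n$ arms as follows. Arm $n$ is a reference arm with $\mu_{n,k} = 1/2$ for every agent $k$, hence $\mu_n = 1/2$. For each arm $i \in [n-1]$ and agent $k$, let $\D_{i,k}$ be Bernoulli with mean $3/4$ if $i \in S_k$ and mean $0$ otherwise, so $\mu_i = \frac{3}{4K}\,\abs{\{k : i \in S_k\}}$. In a NO-instance every arm $i \le n-1$ has $\mu_i \le 3/(4K) < 1/2$ (assuming $K \ge 2$; $K=1$ is trivial), so arm $n$ is the unique best arm. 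In a YES-instance the special arm $i^\star$ has $\mu_{i^\star} = 3/4$, every other arm $i \in [n-1]\setminus\{i^\star\}$ has $\mu_i \le 3/(4K) < 1/2$, and arm $n$ has mean $1/2$, so $i^\star$ is the unique best arm. In both cases the top arm is strictly unique, so the instance is a legal input to top-$1$ identification.

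Now suppose $\A$ is a collaborative algorithm that outputs the top-$1$ arm with probability at least $0.99$ using $C$ bits of communication, for whatever time budget $T$ it chooses. Each agent $k$ knows $S_k$, hence knows all of its own local means $\{\mu_{i,k}\}_{i \in [n]}$ exactly, so it can answer every arm pull $\A$ requests of it by drawing a fresh local Bernoulli sample, needing no communication for the sampling itself. Thus running $\A$ in this way is a randomized coordinator-model protocol whose total communication equals $C$; the coordinator then declares ``NO-instance'' if $\A$'s output is arm $n$ and ``YES-instance'' otherwise, which is correct with probability at least $0.99$ by the case analysis above. Therefore $C = \Omega(NK) = \Omega(nK)$. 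Note this bound is independent of $T$ and hence of any speedup, since an algorithm with no speedup merely forces each agent to simulate more (but still free) local pulls without changing $C$.

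I expect the main subtlety to be citing the right communication-complexity result: one must use the tight $\Omega(NK)$ message-passing lower bound for multi-party Set-Disjointness (with the unique-intersection promise), not the weaker bounds that follow from treating it as a two-party problem. The remaining points are routine: the simulation argument is immediate once agents know their local means, the ``legal instance'' condition ($\Delta_i \ne 0$) is verified by the strict uniqueness of the top arm in both cases, and one should remark only briefly that the round structure of the collaborative learning model can only restrict the protocol further, so the lower bound still applies.
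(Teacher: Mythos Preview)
Your proposal is correct and follows essentially the same approach as the paper: both reduce from multi-party Set-Disjointness in the coordinator model (citing the $\Omega(NK)$ lower bound of Braverman et al.), encode each agent's set as local arm means, and introduce a reference arm whose global mean separates YES-instances from NO-instances. The specific numerical choices differ (you use local means $\{0,3/4\}$ and a reference at $1/2$, while the paper uses $(1+X_i^k)/3+i\delta$ with a differently placed reference arm), and you are slightly more explicit about using the unique-intersection promise version and about why agents can simulate pulls for free, but the underlying argument is the same.
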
 

\begin{proof}
We make use of a problem called $K$-Set-Disjointness (\DISJ) in multi-party communication complexity.  In the coordinator model, the \DISJ\ problem can be described as follows: each agent $k\ (k \in [K])$ holds a $n$-bit vector $X^k = (X_1^k, \ldots, X_n^k) \in \{0,1\}^n$, and the coordinator has no input.  The $K$ agents and the coordinator want to compute the following function via communication:
\begin{eqnarray*}
\text{\DISJ}(X^1, \ldots, X^K)   =  \left\{
  \begin{array}{rl}
   1, & \exists i \in [n] \text{ s.t. } \forall j \in [k], X_i^j = 1,\\
   0, & \text{otherwise.}
  \end{array}
  \right.
\end{eqnarray*}
It is known that any randomized algorithm that solves \DISJ\ with probability at least $0.99$ needs $\Omega(nk)$ bits of communication \cite{BEO+13}. 

We first perform an input reduction from \DISJ\ to top-$1$ arm identification.  Each agent $k$ converts its input vector (of \DISJ) $X^k = (X_1^k, \ldots, X_n^k) \in \{0,1\}^n$ to $n$ arms (of  top-$1$ arm identification) with the following means: the $i$-th arm has a local mean of 
$$(1+X_i^k)/3+i\delta,$$ 
where $\delta = 1/n^2$ is a small noise that we add to means to make sure that the best arm is unique.  
The global mean of the $i$-th ($i \in [n]$) arm is 
$$
\mu_i = \frac{1}{K} \sum_{k \in [K]}\left( \frac{1+X_i^k}{3}+i\delta \right) \in (0, 1).
$$ 
We further introduce a special arm (the $(n+1)$-th arm) with all $K$ local means being 
$$
\mu_{n+1, 1} = \ldots =  \mu_{n+1, K} = \frac{n-1}{n}+\frac{1}{2n}.
$$  
Thus $\mu_{n+1} = \frac{n-1}{n}+\frac{1}{2n}$.

Clearly, for any $i \in [n]$ such that $X_i^1 = \ldots = X_i^K = 1$, we have $\mu_i > \mu_{n+1}$. On the other hand, for any $i \in [n]$ such that there exists a $k \in [K]$ such that $X_i^k = 0$, we have $\mu_i < \mu_{n+1}$.
Therefore, if the top-$1$ arm is the $(n+1)$-th arm, then \DISJ$(X^1, \ldots, X^K) = 0$. Otherwise, if the top-$1$ arm is {\em not} the $(n+1)$-th arm, then \DISJ$(X^1, \ldots, X^K) = 1$.  Therefore, any algorithm $\A$ that solves top-$1$ arm identification on the $(n+1)$ arms with aforementioned local means can also solve the original \DISJ\ problem.  The theorem follows from the $\Omega(nK)$ lower bound of the \DISJ\ problem.
\end{proof}

\section{Experiments}
\label{sec:exp}

In this section, we present the experimental study on our algorithms.   

\begin{figure*}[t]
	\includegraphics[scale=0.27]{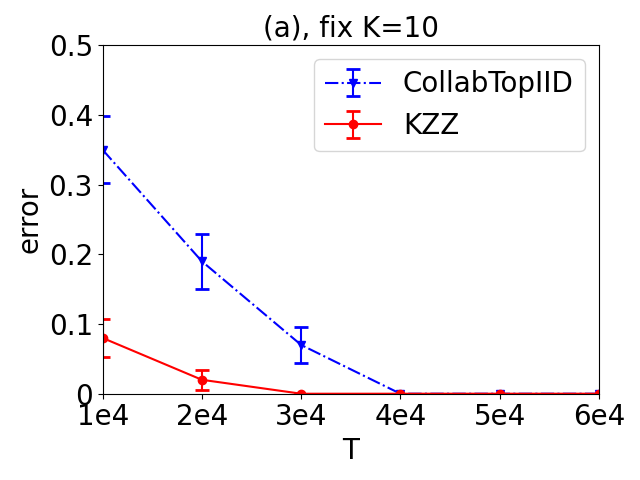}
	\includegraphics[scale=0.27]{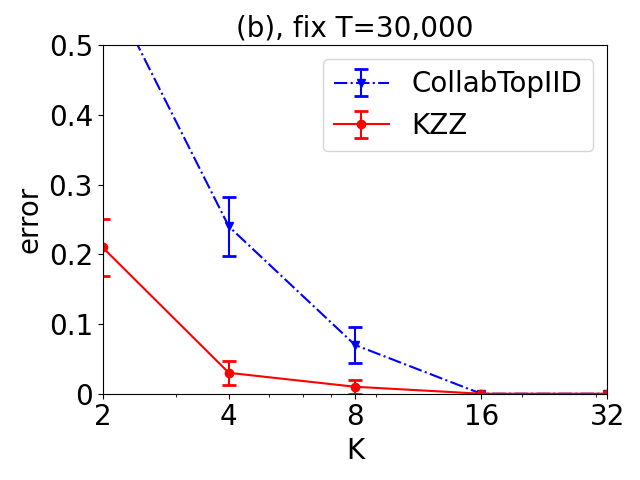}
	\includegraphics[scale=0.27]{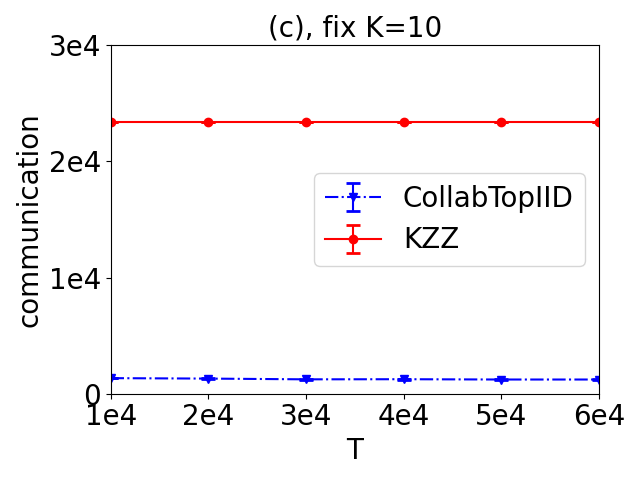}
	\includegraphics[scale=0.27]{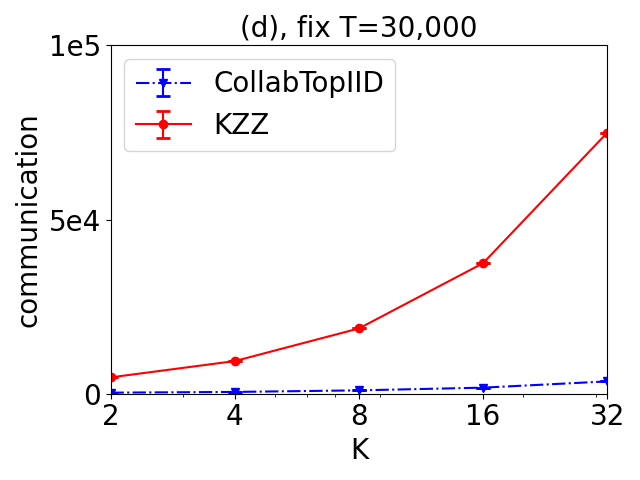}
	\caption{Performance of algorithms for top-$1$ arm identification in the IID data setting.}\label{fig:results-1}
\end{figure*}

\begin{figure*}[t]
	\includegraphics[scale=0.27]{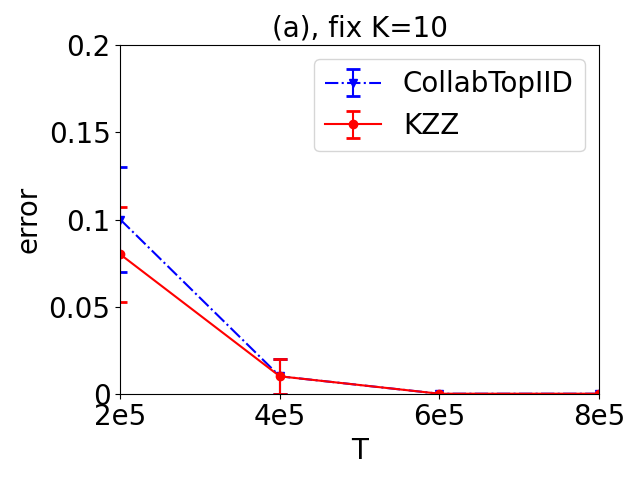}
	\includegraphics[scale=0.27]{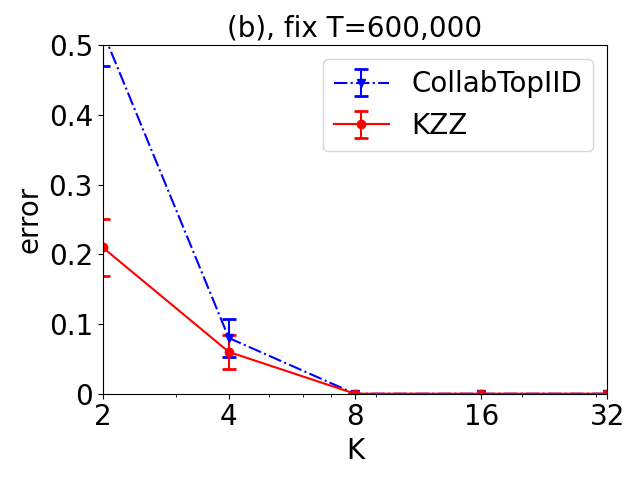}
	\includegraphics[scale=0.27]{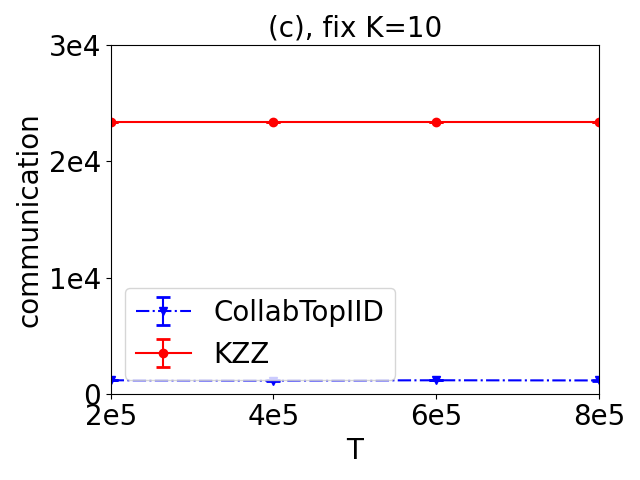}
	\includegraphics[scale=0.27]{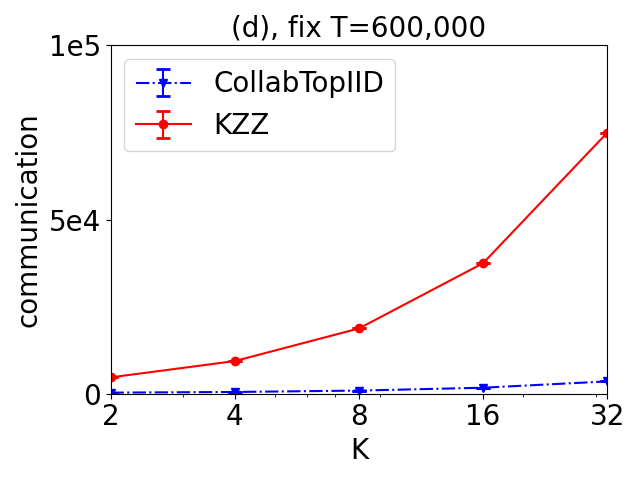}
	\caption{Performance of algorithms for top-$8$ arm identification in the IID data setting.}\label{fig:results-8}
\end{figure*}

\vspace{2mm}
\noindent{\bf Algorithms.\ \ }  In the IID data setting, we implement Algorithm~\ref{alg:main} (denoted by \CIID), and compare it with the (only) existing algorithm (denoted by \KZZ) for collaborative top-$m$ arm identification \cite{KZZ20}.  We note that the original design goal of \KZZ\ is to minimize the number of rounds of the collaborative learning process. The communication cost of \KZZ\ is $\Omega(Kn)$, which is far from being optimal. 

In the non-IID data setting, we have implemented 
% Algorithm~\ref{alg:noniid}
Algorithm~6 (denoted by \CNIID) and tested its performance.  Since there is no direct  competitor in the non-IID data setting, we create a baseline algorithm named \Unif.  \Unif\ only uses one round. Given a time horizon $T$, each agent pulls each of the $n$ arms for $T/n$ times, and then sends the local empirical means to the coordinator.  The coordinator computes for each arm its global empirical mean, and then outputs the one with the highest global mean. Let $\Delta_{\min}$ be the mean gap between the best arm and the second best arm.  It is easy to show that when $T \ge \frac{c_T n \log (Kn)}{K\Delta_{\min}^2}$ for a sufficiently large constant $c_T$, then \Unif\ correctly outputs the best arm with probability at least $0.99$.  The communication cost of \Unif\ is $Kn$, which is the {\em lower bound} in the non-IID data setting (recall Theorem~\ref{thm:non-iid-lb}).

\vspace{2mm}
\noindent{\bf Datasets.\ \ }   We use a real-world dataset MovieLens~\cite{HK16}.  We select $588$ movies scored by at least $20,000$ users. For the $i$-th movie, we set $\mu_i$ to be the average rating of the movie divided by $5$ (to make sure that $\mu_i \in (0,1)$).  In the IID data setting, we regard each movie as an arm associated with a Bernoulli distribution of mean $\mu_i$.  

In the non-IID data setting, we split users into $K = 10$ groups (the user with ID $x$ is included in Group ($x \bmod 10$)), and select $574$ movies scored by at least $1,000$ users in each of the $K$ groups. For the $i$-th movie, we set $\mu_{i,k}$ to be the average rating of the movie in group $k$ divided by 5.

\vspace{2mm}
\noindent{\bf Computation Environment.\ \ }
All algorithms are implemented using programming language Kotlin. All experiments were conducted in PowerEdge R740 server equipped $2 \times$Intel Xeon Gold $6248$R $3.0$ GHz ($24$-core/$48$-thread per CPU) and $256$GB RAM.

\vspace{2mm}
\noindent{\bf Results.\ \ }   Our experimental results for the IID data setting are depicted in Figure~\ref{fig:results-1} (for $m=1$) and Figure~\ref{fig:results-8} (for $m=8$).  All results are averages over $100$ runs. The error bars stand for the standard deviation.

Figure~\ref{fig:results-1}(a) shows the influence of the time horizon $T$ on the error probability of outputting the best arm.  We fix the number of agents to be $10$.   We observe that the error probabilities of both \CIID\ and \KZZ\ quickly converge to zero.  \KZZ\ converges slightly faster than \CIID, which is because in order to achieve the communication efficiency, in the first phase of \CIID, the sizes of local arm sets across the $K$ agents may not be equal, which could result idle agents at  a constant fraction of time steps.  To achieve error probability $0.01$, \KZZ\ outperforms \CIID\ by a factor of 1.3 in running time.

Figure~\ref{fig:results-1}(b) shows the influence of the number of agents $K$ on the error probability of outputting the best arm.  We fix the time horizon to be $30,000$. The curves are similar to those in Figure~\ref{fig:results-1}(a).  For the same reason mentioned above, \KZZ\ converges faster than \CIID.

In Figure~\ref{fig:results-1}(c) and (d), we compare the communication cost of two collaborative algorithms \CIID\ and \KZZ.  In Figure~\ref{fig:results-1}(c) we fix $K$ and vary $T$, while in Figure~\ref{fig:results-1}(d) we fix $T$ and vary $K$.  It is clear that \CIID\ significantly outperforms \KZZ. For example, for $K = 10$ and $T = 30,000$, the communication cost of \KZZ\ is $20$ times of that of \CIID.  Recall that the communication cost of  \CIID\ is $O(K \ln^2 n + m)$, which is independent of $T$; the same holds for \KZZ.  This is why the two curves in Figure~\ref{fig:results-1}(c) are flat.  

Figure~\ref{fig:results-8} presents the results when $m$ is set to $8$ (i.e., we try to identify the top-$8$ arms).  Generally speaking, the performance trends for \CIID\ and \KZZ\ are similar as that in the case of $m = 1$.  The running time gaps between the two algorithms are smaller in Figure~\ref{fig:results-8}(a) and (b), compared with that in Figure~\ref{fig:results-1}(a) and (b).  For example, when $K$ is fixed to be $10$, to achieve error probability $0.01$, the running time of \KZZ\ and \CIID\ are the same.

Due to space constraints, we leave the experimental results on non-IID data to 
% the full version of this paper.
Appendix~\ref{sec:app-exp}.

%\section{Concluding Remarks}
%\label{sec:conclusion}
%
%In this paper, we present almost tight speedup-communication tradeoffs for top-$m$ arm identification in the collaborative learning model. There are a couple of directions for future work.  First, there are still some logarithmic factor gaps between our upper and lower bounds.  Some of these are inherent due to our definition of speedup -- there is at least one logarithmic gap between the performance of the state-of-the-art centralized algorithm and $H^{\langle m \rangle}$.  
%We believe better algorithms and/or analysis can shave one or two logarithmic factors in our upper bound results.  
%Second, to the best of our knowledge, the only other speedup-communication tradeoff being investigated in the collaborative learning model is in \cite{WHCW20} for regret minimization in MAB.  It would be interesting to study other fundamental problems in the bandit theory or general reinforcement learning in this model.

\section*{Acknowledgments}
Nikolai Karpov and Qin Zhang are supported in part by NSF CCF-1844234 and CCF-2006591.

\bibliography{paper}

\begin{thebibliography}{20}
\providecommand{\natexlab}[1]{#1}

\bibitem[{Audibert, Bubeck, and Munos(2010)}]{ABM10}
Audibert, J.; Bubeck, S.; and Munos, R. 2010.
\newblock Best Arm Identification in Multi-Armed Bandits.
\newblock In \emph{COLT}, 41--53.

\bibitem[{Braverman et~al.(2013)Braverman, Ellen, Oshman, Pitassi, and
  Vaikuntanathan}]{BEO+13}
Braverman, M.; Ellen, F.; Oshman, R.; Pitassi, T.; and Vaikuntanathan, V. 2013.
\newblock A Tight Bound for Set Disjointness in the Message-Passing Model.
\newblock In \emph{FOCS}, 668--677.

\bibitem[{Bubeck, Wang, and Viswanathan(2013)}]{BWV13}
Bubeck, S.; Wang, T.; and Viswanathan, N. 2013.
\newblock Multiple identifications in multi-armed bandits.
\newblock In \emph{ICML}, 258--265.

\bibitem[{Chen, Li, and Qiao(2017{\natexlab{a}})}]{CLQ17b}
Chen, L.; Li, J.; and Qiao, M. 2017{\natexlab{a}}.
\newblock Nearly Instance Optimal Sample Complexity Bounds for Top-k Arm
  Selection.
\newblock In \emph{AISTATS}, 101--110.

\bibitem[{Chen, Li, and Qiao(2017{\natexlab{b}})}]{CLQ17}
Chen, L.; Li, J.; and Qiao, M. 2017{\natexlab{b}}.
\newblock Towards Instance Optimal Bounds for Best Arm Identification.
\newblock In \emph{COLT}, 535--592.

\bibitem[{Harper and Konstan(2016)}]{HK16}
Harper, F.~M.; and Konstan, J.~A. 2016.
\newblock The MovieLens Datasets: History and Context.
\newblock \emph{{ACM} Trans. Interact. Intell. Syst.}, 5(4): 19:1--19:19.

\bibitem[{Hillel et~al.(2013)Hillel, Karnin, Koren, Lempel, and
  Somekh}]{HKK+13}
Hillel, E.; Karnin, Z.~S.; Koren, T.; Lempel, R.; and Somekh, O. 2013.
\newblock Distributed Exploration in Multi-Armed Bandits.
\newblock In \emph{NIPS}, 854--862.

\bibitem[{Kairouz et~al.(2019)Kairouz, McMahan, Avent, Bellet, Bennis, Bhagoji,
  Bonawitz, Charles, Cormode, Cummings, D'Oliveira, Rouayheb, Evans, Gardner,
  Garrett, Gasc{\'{o}}n, Ghazi, Gibbons, Gruteser, Harchaoui, He, He, Huo,
  Hutchinson, Hsu, Jaggi, Javidi, Joshi, Khodak, Kone{\v{c}}n{\'y}, Korolova,
  Koushanfar, Koyejo, Lepoint, Liu, Mittal, Mohri, Nock, {\"{O}}zg{\"{u}}r,
  Pagh, Raykova, Qi, Ramage, Raskar, Song, Song, Stich, Sun, Suresh,
  Tram{\`{e}}r, Vepakomma, Wang, Xiong, Xu, Yang, Yu, Yu, and
  Zhao}]{corr/abs-1912-04977}
Kairouz, P.; McMahan, H.~B.; Avent, B.; Bellet, A.; Bennis, M.; Bhagoji, A.~N.;
  Bonawitz, K.~A.; Charles, Z.; Cormode, G.; Cummings, R.; D'Oliveira, R.
  G.~L.; Rouayheb, S.~E.; Evans, D.; Gardner, J.; Garrett, Z.; Gasc{\'{o}}n,
  A.; Ghazi, B.; Gibbons, P.~B.; Gruteser, M.; Harchaoui, Z.; He, C.; He, L.;
  Huo, Z.; Hutchinson, B.; Hsu, J.; Jaggi, M.; Javidi, T.; Joshi, G.; Khodak,
  M.; Kone{\v{c}}n{\'y}, J.; Korolova, A.; Koushanfar, F.; Koyejo, S.; Lepoint,
  T.; Liu, Y.; Mittal, P.; Mohri, M.; Nock, R.; {\"{O}}zg{\"{u}}r, A.; Pagh,
  R.; Raykova, M.; Qi, H.; Ramage, D.; Raskar, R.; Song, D.; Song, W.; Stich,
  S.~U.; Sun, Z.; Suresh, A.~T.; Tram{\`{e}}r, F.; Vepakomma, P.; Wang, J.;
  Xiong, L.; Xu, Z.; Yang, Q.; Yu, F.~X.; Yu, H.; and Zhao, S. 2019.
\newblock Advances and Open Problems in Federated Learning.
\newblock \emph{CoRR}, abs/1912.04977.

\bibitem[{Karnin, Koren, and Somekh(2013)}]{KKS13}
Karnin, Z.; Koren, T.; and Somekh, O. 2013.
\newblock Almost Optimal Exploration in Multi-Armed Bandits.
\newblock In \emph{ICML}, 1238--1246.

\bibitem[{Karpov, Zhang, and Zhou(2020)}]{KZZ20}
Karpov, N.; Zhang, Q.; and Zhou, Y. 2020.
\newblock Collaborative Top Distribution Identifications with Limited
  Interaction (Extended Abstract).
\newblock In \emph{FOCS}, 160--171.

\bibitem[{Madhushani and Leonard(2021)}]{ML21}
Madhushani, U.; and Leonard, N.~E. 2021.
\newblock When to Call Your Neighbor? Strategic Communication in Cooperative
  Stochastic Bandits.
\newblock \emph{CoRR}, abs/2110.04396.

\bibitem[{Mitra, Hassani, and Pappas(2021)}]{MHP21}
Mitra, A.; Hassani, H.; and Pappas, G.~J. 2021.
\newblock Exploiting Heterogeneity in Robust Federated Best-Arm Identification.
\newblock \emph{CoRR}, abs/2109.05700.

\bibitem[{Schmidt, Siegel, and Srinivasan(1993)}]{SSS93}
Schmidt, J.~P.; Siegel, A.; and Srinivasan, A. 1993.
\newblock Chernoff-Hoeffding Bounds for Applications with Limited Independence.
\newblock In \emph{SODA}.

\bibitem[{Shi and Shen(2021)}]{SS21}
Shi, C.; and Shen, C. 2021.
\newblock Federated Multi-Armed Bandits.
\newblock In \emph{AAAI}, 9603--9611. {AAAI} Press.

\bibitem[{Shi, Shen, and Yang(2021)}]{SSY21}
Shi, C.; Shen, C.; and Yang, J. 2021.
\newblock Federated Multi-armed Bandits with Personalization.
\newblock In Banerjee, A.; and Fukumizu, K., eds., \emph{AISTATS}, volume 130
  of \emph{Proceedings of Machine Learning Research}, 2917--2925. {PMLR}.

\bibitem[{Sz{\"o}r{\'e}nyi et~al.(2013)Sz{\"o}r{\'e}nyi, Busa-Fekete,
  Heged{\H{u}}s, Orm{\'a}ndi, Jelasity, and K{\'e}gl}]{SBH+13}
Sz{\"o}r{\'e}nyi, B.; Busa-Fekete, R.; Heged{\H{u}}s, I.; Orm{\'a}ndi, R.;
  Jelasity, M.; and K{\'e}gl, B. 2013.
\newblock Gossip-Based Distributed Stochastic Bandit Algorithms.
\newblock In \emph{ICML}, 19--27.

\bibitem[{Tao, Zhang, and Zhou(2019)}]{TZZ19}
Tao, C.; Zhang, Q.; and Zhou, Y. 2019.
\newblock Collaborative Learning with Limited Interaction: Tight Bounds for
  Distributed Exploration in Multi-armed Bandits.
\newblock In \emph{FOCS}, 126--146.

\bibitem[{Wang et~al.(2020{\natexlab{a}})Wang, Prouti{\`{e}}re, Ariu, Jedra,
  and Russo}]{WPA+20}
Wang, P.; Prouti{\`{e}}re, A.; Ariu, K.; Jedra, Y.; and Russo, A.
  2020{\natexlab{a}}.
\newblock Optimal Algorithms for Multiplayer Multi-Armed Bandits.
\newblock In \emph{AISTATS}, volume 108, 4120--4129. {PMLR}.

\bibitem[{Wang et~al.(2020{\natexlab{b}})Wang, Hu, Chen, and Wang}]{WHCW20}
Wang, Y.; Hu, J.; Chen, X.; and Wang, L. 2020{\natexlab{b}}.
\newblock Distributed Bandit Learning: Near-Optimal Regret with Efficient
  Communication.
\newblock In \emph{ICLR}. OpenReview.net.

\bibitem[{Wegman and Carter(1981)}]{WC81}
Wegman, M.~N.; and Carter, L. 1981.
\newblock New Hash Functions and Their Use in Authentication and Set Equality.
\newblock \emph{J. Comput. Syst. Sci.}, 22(3): 265--279.

\end{thebibliography}

\newpage

\appendix

%\section{Appendix}
%\label{sec:app}

\noindent\rule{\textwidth/2}{1pt}
\begin{center}
	\textbf{\large Appendix for Communication-Efficient Collaborative Top Arm Identification}
\end{center}
\noindent\rule{\textwidth/2}{1pt}

\section{Missing Details in Section~\ref{sec:algo}}
\label{sec:app-algo}

Subroutines \textsc{CollabSearch} and  \textsc{BalancedPullDist} are described in Algorithm~\ref{alg:binary} and Algorithm~\ref{alg:partition}, respectively.

\begin{algorithm}
\caption{\textsc{CollabSearch}$(A_1, \dotsc, A_K, m)$}
\label{alg:binary}
\KwIn{for each $k \in [K]$, the $k$-th agent has a set of empirical means $A_k$.}
\KwOut{the $m$-th smallest element in $\bigcup_{k \in [K]} A_k$.}
\Repeat{$|A_k| = 1$ for all $k \in [K]$}{
for each $k \in [K]$, the $k$-th agent sends the median of $A_k$ (denoted by $b_k$), and $c_k = \lvert \{x \in A_k \mid x < b_k\} \rvert$\;
%Coordinator sort the array $b_1, \dotsc, b_k$\;
let $\sigma : [K] \to [K]$ be a bijection such that $b_{\sigma(1)} \le  \ldots \le b_{\sigma(k)}$\;
Coordinator finds an index $v$ such that $\sum_{i = 1}^{v} c_{\sigma(i)} < m$ and $\sum_{i = 1}^{v + 1} c_{\sigma(i)} \ge m$\;
Coordinator sends a message `$>$' to agents $\{\sigma(1), \ldots, \sigma(v)\}$ and a message `$<$' to agents $\{\sigma(v+1), \ldots, \sigma(K)\}$\;
agents who get message `$>$' removes elements smaller than $b_k$ from $A_k$, and those who get message `$<$' removes elements larger than $b_k$ from $A_k$\;
Coordinator sets $m \gets m - \sum_{i = 1}^{v} c_{\sigma(i)}$\;
}
for each $k \in [K]$, the $k$-th agent sends Coordinator the single element  in $A_k$; Coordinator finds the $m$-th smallest element $b$ among these $K$ elements\label{ln:d-8}\;
\Return $b$.
\end{algorithm}

\begin{algorithm}
	\caption{$\textsc{BalancedPullDist}(I, B)$}
	\label{alg:partition}
	\KwIn{set of arms $I = \{I_1, \ldots, I_{\abs{I}}\}$, number of pulls for each arm.}
	\KwOut{output a set of tuples $\{(i, k, t)\}$, where a tuple $(i, k, t$) means that agent $k$ should pull arm $I_i$ for $t$ times} 
	for each $i \in [n]$ set $T_i \gets B$\;
	set $n \gets \abs{I}$, $Q \gets \emptyset$\;
	set $i \gets 1$\;
	\For{$k = 1, \dotsc, K$}
	{
		set $r \gets \lceil \abs{I} B/K \rceil$\;
		\While{$i \le n$ and $r > 0$}
		{
			add $(i, k,  \min(T_i, r))$ to $Q$\;
			$T_i \gets T_i - \min(T_i, r)$, $r \gets r - \min(T_i, r)$\;
			\lIf{$T_i = 0$}
			{
				$ i \gets i + 1$
			}
		}
	}
	\Return{$Q$}.
\end{algorithm}

We briefly describe the two algorithms in words.  In \textsc{CollabSearch}$(A_1, \ldots, A_k, m)$, $K$ agents and the coordinator use a binary-search like strategy to compute the $m$-th smallest element in the union of $K$ sets $\bigcup_{k \in [K]}A_k$ in a communication-efficient way.  In each iteration, each agent $k$ sends the median $b_k$ of its local set to the coordinator and the number of elements (denoted by $c_k$) that are smaller than the median.  Based on $\{b_k, c_k\}_{k \in [K]}$, the coordinator informs each agent which half of arms to remove from its local set. 

In \textsc{BalancedPullDist}$(I, B)$, the coordinator assigns the set of arms $I$ to $K$ agents such that each arm is pulled $B$ times. The assignment makes sure that each agent makes the same number of pulls.  To make the subroutine communication-efficient, the coordinator conducts a linear scan of the arms, and assigns them to agents in a greedy way.  More precisely, the coordinator first computes the number of pulls each agent should perform, and then tries to assign each arm to a single agent whenever possible, or to a minimal number of agents otherwise.

\section{Missing Details in Section~\ref{sec:analysis}}
\label{sec:app-analysis}

\subsection{Proof of Lemma~\ref{cla:E} }
\label{sec:proof-cla-E}

Recall that in both Algorithm~\ref{alg:local} and Algorithm~\ref{alg:global}, $\hat{\mu}_i^{(r)}$ is the average of $T_{r+1}$ i.i.d.\ random variables sampled from $\D_i$ with mean $\mu_i$.  By Hoeffding's inequality (Lemma~\ref{lem:chernoff}), and using the inequality $\max_{i \in I} \left\{{i}/{\Delta^2_{\pi(i)}}\right\} \le H^{\langle m\rangle}$ \cite{ABM10}, we have
\begin{eqnarray}
&& \Pr\left[\abs{\hat{\mu}^{(r)}_i - \mu_i} \ge \frac{\Delta_{\pi(n_{r+1})}}{8} \right] \nonumber \\
&\le& 2 \exp\left(-\frac{\Delta^2_{\pi(n_{r+1})}T_{r+1}}{32} \right)\nonumber \\
&\le& 2 \exp\left(- \frac{\Delta^2_{\pi(n_{r+1})} TK}{128 n_{r+1} R}\right)\nonumber \\
	&\le& 2\exp\left(- \frac{TK}{128 H^{\langle m \rangle} R}\right). \label{eq:c-1} \nonumber
\end{eqnarray}
%\qinsays{Constants are not correct. E.g., 128}
By a union bound over $i = 1, \ldots, n$ and $r = 0, \ldots, R-1$, we have
\begin{eqnarray}
		\Pr\left[\bar{\E}\right] &\le& \sum_{r = 0}^{R - 1} \sum_{i = 1}^n\Pr\left[\abs{\hat{\mu}^{(r)}_i - \mu_i} \ge \frac{\Delta_{\pi(n_{r + 1})}}{8} \right] \nonumber \\ 
		&\le& 2 n R \exp\left(- \frac{TK}{128 H^{\langle m \rangle} R}\right)\,. \label{eq:c-2}
	\end{eqnarray}
The claim follows from (\ref{eq:c-2}) and the fact that $R \le \log n+1 = \log(2n)$.

\subsection{Proof of Lemma~\ref{lem:induction}}
\label{sec:proof-lem-induction}

We prove for \textsc{LocalElim}$(r)$; the proof for \textsc{GlobalElim}$(r)$ is essentially the same. 

Let $\rho_r : [n_r] \to I_r$ be the bijection such that $\mu_{\rho_r(1)} \ge \dotsc \ge \mu_{\rho_r(n_r)}$.  

By $\E$, we have for any $i = 1, \ldots, n_r$
\begin{equation}
	\label{eq:d-1}
		\mu_i - \frac{\Delta_{\pi(n_{r+1})}}{8} \le \hat{\mu}^{(r)}_i	\le \mu_i + \frac{\Delta_{\pi(n_{r+1})}}{8}\,.
\end{equation}
In particular, for any $i \in \{\rho(1), \ldots, \rho(m_r)\}$, it holds that $\hat{\mu}^{(r)}_i \ge {\mu}_{\rho(m_r)} - \frac{\Delta_{\pi(n_{r+1})}}{8}$, which, combined with the definition of $\sigma_r$ in \textsc{LocalElim}$(r)$ (Line~\ref{ln:b-2}), gives 
\begin{equation}
\label{eq:d-4}
\hat{\mu}_{\sigma_r(m_r)}^{(r)} \ge {\mu}_{\rho(m_r)} - \frac{\Delta_{\pi(n_{r+1})}}{8}.
\end{equation}
By a symmetric argument, we have 
\begin{equation}
\label{eq:d-5}
\hat{\mu}_{\sigma_r(m_r+1)}^{(r)} \le {\mu}_{\rho(m_r+1)} + \frac{\Delta_{\pi(n_{r+1})}}{8}.
\end{equation}
Recall the definition of $\Delta_i$ (Eq.\ (\ref{eq:gap})):
\begin{equation}
\label{eq:d-8}
\Delta_i = \max\left\{\mu_i - \mu_{\rho(m_r+1)}, \mu_{\rho(m_r)} - \mu_i\right\},
\end{equation}
and the definition of  $\Delta_i^{(r)}$ (Line~\ref{ln:b-3} of \textsc{LocalElim}$(r)$):
\begin{equation}
\label{eq:d-9}
\hat{\Delta}^{(r)}_i = \max\left\{\hat{\mu}^{(r)}_i - \hat{\mu}^{(r)}_{\sigma_r(m_r + 1)}, \hat{\mu}^{(r)}_{\sigma_r(m_r)} - \hat{\mu}^{(r)}_i\right\}.
\end{equation}

Combining (\ref{eq:d-1}), (\ref{eq:d-4}),  (\ref{eq:d-5}), (\ref{eq:d-8}), and (\ref{eq:d-9}), we have
\begin{equation}
\label{eq:d-2}
		\Delta_i -  \frac{\Delta_{\pi(n_{r+1})}}{4} \le \hat{\Delta}^{(r)}_{i} \le \Delta_i + \frac{\Delta_{\pi(n_{r+1})}}{4}.
\end{equation}
By (\ref{eq:d-2}), for any $i$ such that $\Delta_i \ge \Delta_{\pi(n_{r+1})}$, we have
\begin{equation}
\label{eq:d-3}
		\hat{\Delta}^{(r)}_{i} \ge \Delta_{\pi(n_{r+1})} - \frac{\Delta_{\pi(n_{r+1})}}{4}  = \frac{3{\Delta}_{\pi(n_{r+1})}}{4}.
\end{equation}

Recall that in \textsc{LocalElim}$(r)$, we add to $E_r$ those arm $i$ for which $\hat{\Delta}^{(r)}_{i}$ is larger than $\hat{\Delta}^{(r)}_{\pi_r(n_{r+1})}$. By (\ref{eq:d-3}), we have
\begin{equation}
\label{eq:d-7}
	\forall {i \in E_r}, \quad \hat{\Delta}^{(r)}_{i} \ge \hat{\Delta}^{(r)}_{\pi_r(n_{r+1})} \ge \frac{3{\Delta}_{\pi(n_{r+1})}}{4}.
\end{equation}
We analyze arms $i \in E_r$ in two cases.

{\bf Case I: $i \in E_r \backslash A_r$.} In this case, $\hat{\Delta}^{(r)}_i  = \hat{\mu}^{(r)}_{\sigma_r(m_r)} - \hat{\mu}^{(r)}_i$,
plugging which to (\ref{eq:d-7}), we have
\begin{equation}
\label{eq:e-1}
		 \hat{\mu}^{(r)}_i \le \hat{\mu}^{(r)}_{\sigma_r(m_r)} - \frac{3 \Delta_{\pi(n_{r+1})}}{4}.
\end{equation}

On the other hand, for any $i \in I_r \cap \G$, we have
\begin{eqnarray}
	 	\hat{\mu}^{(r)}_i &\ge& \mu_i - \frac{\Delta_{\pi(n_{r+1})}}{8} \nonumber \\
	 	&\ge& \mu_{\rho_r(m_r)} - \frac{\Delta_{\pi(n_{r+1})}}{8} \nonumber \\
	 	 &\ge&  \hat{\mu}^{(r)}_{\sigma_r(m_r)} - \frac{\Delta_{\pi(n_{r+1})}}{4}, \label{eq:e-2}
\end{eqnarray}
where in the first inequality we have used (\ref{eq:d-1}). The second inequality is due to the fact $i \in \G$ and the induction hypothesis $\G \subseteq Q_r \cup I_r$ (note that $\abs{Q_r} = m - m_r$). The third inequality follows from (\ref{eq:d-4}).

By (\ref{eq:e-1}) and (\ref{eq:e-2}), we know that 
\begin{equation*}
(E_r\backslash A_r) \cap (I_r \cap \G) = \emptyset,
\end{equation*}
which implies $I_r \cap \G \subseteq A_r \cup (I_r \backslash E_r) = A_r \cup I_{r+1}$, and consequently 
\begin{equation}
\label{eq:e-4}
\G \subseteq Q_{r+1} \cup I_{r+1}.
\end{equation}

{\bf Case II: $i \in A_r$.} In this case, $\hat{\Delta}^{(r)}_i  = \hat{\mu}^{(r)}_i - \hat{\mu}^{(r)}_{\sigma_r(m_r+1)}$, plugging which to (\ref{eq:d-7}), we have
\begin{equation}
\label{eq:e-5}
		 \hat{\mu}^{(r)}_i \ge \hat{\mu}^{(r)}_{\sigma_r(m_r+1)} + \frac{3 \Delta_{\pi(n_{r+1})}}{4}.
\end{equation}

On the other hand, for any $i \in I_r \backslash \G$, we have
\begin{eqnarray}
	\hat{\mu}^{(r)}_i &\le& \mu_i + \frac{\Delta_{\pi(n_{r+1})}}{8} \nonumber \\
	 &\le& \mu_{\rho_r(m_r+1)} + \frac{\Delta_{\pi(n_{r+1})}}{8} \nonumber \\
	 &\le& \hat{\mu}^{(r)}_{\sigma_r(m_r+1)} + \frac{\Delta_{\pi(n_{r +1})}}{4}, \label{eq:e-6}
\end{eqnarray}
where in the first inequality we have used (\ref{eq:d-1}). The second inequality is due to the fact $i \in I_r \backslash \G$ and the induction hypothesis $Q_r \subseteq \G$ (note that $\abs{Q_r} = m - m_r$). The third inequality is due to (\ref{eq:d-5}).

By (\ref{eq:e-5}) and (\ref{eq:e-6}), we know that
\begin{equation*}
A_r \cap (I_r \backslash \G) = \emptyset,
\end{equation*}
which implies $A_r \subseteq \G$. Consequently
\begin{equation}
\label{eq:e-7}
Q_{r+1} \subseteq \G.
\end{equation}
The lemma follows from (\ref{eq:e-4}) and (\ref{eq:e-7}).

\subsection{Proof of Lemma~\ref{lem:CollabSearch}}
\label{sec:proof-lem-CollabSearch}

Let $z = \abs{\bigcup_{k \in [K]} A_k}$.  First, it is easy to see that the main loop of \textsc{CollabSearch} runs for at most $O(\log z)$ iterations, since at each round,  half of the elements have been removed in each list.  In each round, $O(K)$ words have been exchanged between the parties.  After the loop (Line~\ref{ln:d-8}), the communication cost is bounded by $O(K)$.  Therefore, the total communication is $O(K \log z)$. 

\subsection{Proof of Lemma~\ref{lem:balance}}
\label{sec:proof-lem-balance}

We first observe that the set $I_r$ does not depend on the random hash function $h$, as $I_r = \bigcup_{k \in [K]}I_r^k$ is the {\em union} of all remaining arms held by the $K$ agents at the beginning of the $r$-th round.

For each $k \in [K]$ and each $i \in I_r$, let $X_i^k$ be the indicator random variable of the event that $h(i) = k$.  We thus have $\abs{I_r^k} = \sum_{i \in I_r} X_i^k$.  Since $\bE[X_i^k] = \frac{1}{K}$, we have 
\begin{equation}
\bE\left[\abs{I_r^k}\right] = \bE\left[\sum_{i \in I_r} X_i^k\right] = \frac{n_r}{K}.
\end{equation}

Using a variant of Chernoff-Hoeffding inequality (Lemma~\ref{lem:ch-k}), setting $\delta = \sqrt{\frac{10K\log n}{n_r}}$, and noting that $\lfloor \delta^2 \mu e^{-1} \rfloor \le 10\log n$, we have
\begin{eqnarray}
		\Pr\left[\abs{\abs{I_r^k} - \frac{n_r}{K}} \ge \sqrt{\frac{10 n_r \log n}{K}}\right] 
		&\le& \exp(-\lfloor\delta^2 \mu / 3\rfloor) \nonumber\\
		 &\le& \exp(-3 \log n) \nonumber\\
		 &\le& n^{-3}\,. \nonumber
\end{eqnarray}
By a union bound over $k = 1, \ldots, K$, with probability at least $1 - K/n^3$, we have 
\begin{equation}
\label{eq:b-2}
\forall k \in [K],\quad \abs{\abs{I_r^k} - \frac{n_r}{K}} \le \sqrt{\frac{10 n_r \log n}{K}}.
\end{equation}
When $n_r \ge 100K\log n$, we have $\frac{n_r}{K} \ge 3\sqrt{\frac{10n_r\log n}{K}}$. Using (\ref{eq:b-2}) we have
\begin{equation}
\forall a, b \in [K],\quad \abs{I_r^a} \le 2\abs{I_r^b}.
\end{equation}
Therefore, the partition $\{I_r^k\}_{k \in [K]}$ is balanced according to the definition.

\section{Missing Details in Section~\ref{sec:non-IID}}

\subsection{The Algorithm Description}
\label{sec:app-algo-noniid}
The algorithm for finding the top-$m$ arms in the non-IID data setting is described in Algorithm~\ref{alg:noniid}.

\begin{algorithm}
	\caption{$\textsc{Collab-Top-$m$-NonIID}(I, m, T)$}
	\label{alg:noniid}
	\KwIn{a set $I$ of $n$ arms, parameter $m$, and time horizon $T$.}
	\KwOut{set of arms with $m$ highest means.}
	
	let $R \gets \lceil \log n\rceil$\;
	for $r = 0, \dotsc, R$, set $n_r \gets \lfloor n / 2^r \rfloor$\;
	set $T_0 \gets 0 $, and for $r = 1, \dotsc, R$, set $T_r \gets \lfloor \frac{TK 2^r}{2 n R}\rfloor$\;
	set $I_0 \gets I$, and $m_0 \gets m$\;
	set $Q_0 \gets \emptyset$\;
	\For{$r = 0, \dotsc, R - 1$}{
		for each $k \in [K]$, agent $k$ pulls each arm $i \in I_r$ for $(T_{r + 1} - T_r)/K$ times, computes estimated mean $\hat{\mu}^{(r)}_{i,k}$ after $T_{r + 1}/K$ pulls, and sends it to Coordinator\;
		Coordinator computes $\hat{\mu}^{(r)}_i \gets \frac{1}{K} \sum_{k \in [K]} \hat{\mu}^{(r)}_{i, k}$\;
		let $\sigma_r : [n_r] \to I_r$ be a bijection such that $\hat{\mu}^{(r)}_{\sigma_r(i)} \ge \ldots \ge \hat{\mu}^{(r)}_{\sigma_r(n_r)}$\;
		for each $i \in I_r$, Coordinator computes $\hat{\Delta}^{(r)}_i \gets \max\left\{\hat{\mu}^{(r)}_i - \hat{\mu}^{(r)}_{\sigma_r(m_r + 1)}, \hat{\mu}^{(r)}_{\sigma_r(m_r)} - \hat{\mu}^{(r)}_i\right\}$\;
		let $\pi_r : [n_r] \to I_r$ be a bijection such that $\hat{\Delta}^{(r)}_{\pi_r(1)} \le \dotsc \le \hat{\Delta}^{(r)}_{\pi_r(n_r)} $\;
		Coordinator computes $E_r \gets \left\{i \in I_r \mid \hat{\Delta}^{(r)}_i > \hat{\Delta}^{(r)}_{\pi_r(n_{r + 1})}\right\}$, $A_r \gets \left\{i \in E_r \mid \hat{\mu}^{(r)}_i \ge \hat{\mu}^{(r)}_{\sigma_r(m_r)}\right\}$, and sets $I_{r + 1} \gets I_r \setminus E_r$ and $Q_{r + 1} \gets Q_r \cup A_r$\;
	}
	\Return{$Q_{R}$}
\end{algorithm}

\subsection{Proof of Theorem~\ref{thm:noniid-ub}}
\label{sec:proof-thm-noniid-ub}
\paragraph{Correctness.}
The proof of the correctness of Algorithm~\ref{alg:noniid} is very similar to that of Theorem~\ref{thm:iid-ub}.  Note that in the IID data setting (Algorithm~\ref{alg:main}), each arm is pulled for $T_{r+1}$ times in total by the $K$ agents after the $r$-th round. While in the non-IID setting (Algorithm~\ref{alg:noniid}), each arm is pulled for $T_{r+1}/K$ by {\em each} agent.  Since the global mean is the average of $K$ local means, intuitively, we can think the $K$ agents collectively pull each arm $i$ from its global mean distribution for $T_{r+1}/K \cdot K = T_{r+1}$ times.  This is why the proof of correctness for Theorem~\ref{thm:noniid-ub} is essentially the same as that for Theorem~\ref{thm:iid-ub}.  We include the proof for completeness.

Let $\G\subseteq I$ be the subset of $m$ arms with the highest global means $\mu_{i} = \frac{1}{K} \sum_{k = 1}^K \mu_{i, k}$. We show that for any $r = 0, 1, \dotsc, R$,
\begin{equation}\label{eq:induction}
	(Q_r \subseteq \G) \land (\G \subseteq Q_r \cup I_r)
\end{equation}
holds with a high probability.

In the base case when $r=0$, $(\emptyset = Q_0 \subseteq \G) \land (\G \subset I_0 = I)$ holds trivially.

For the induction step, we introduce the following event. Let $\pi : [n] \to I$ be a bijection such that $\Delta_{\pi(1)} \le \dotsc \le \Delta_{\pi(n)}$. Define 
\begin{equation}\label{eq:e1}
	\E' : \forall{r = 0, \dotsc, R-1}, \forall{i \in I_r} : \abs{\mu_i - \hat{\mu}^{(r)}_i} < \frac{\Delta_{\pi(n_{r + 1})}}{8}\,.
\end{equation}

The following claim says that $\E'$ happens with high probability. 

\begin{claim}
	\[\Pr[\E'] \ge 1 - 2n \log(2n) \cdot \exp\left(-\frac{TK}{128 H^{\langle m\rangle} n \log(2n)}\right)\,.\]
\end{claim}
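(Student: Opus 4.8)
The plan is to mirror the proof of Claim~\ref{cla:E} (in Appendix~\ref{sec:proof-cla-E}), with the single modification coming from the fact that in Algorithm~\ref{alg:noniid} each agent pulls arm $i$ for $T_{r+1}/K$ times, rather than the $K$ agents collectively pulling it $T_{r+1}$ times. First I would fix a round $r$ and an arm $i \in I_r$, and write $\hat{\mu}^{(r)}_i = \frac{1}{K}\sum_{k \in [K]} \hat{\mu}^{(r)}_{i,k}$, where each $\hat{\mu}^{(r)}_{i,k}$ is the empirical mean of $T_{r+1}/K$ i.i.d.\ samples from $\D_{i,k}$. Since the samples across all agents are mutually independent, $\hat{\mu}^{(r)}_i$ is itself the average of $T_{r+1}$ independent bounded random variables (each in $[0,1]$), and $\bE[\hat{\mu}^{(r)}_i] = \frac{1}{K}\sum_k \mu_{i,k} = \mu_i$. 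So Hoeffding's inequality (Lemma~\ref{lem:chernoff}) applies verbatim and yields
\[
\Pr\left[\abs{\hat{\mu}^{(r)}_i - \mu_i} \ge \frac{\Delta_{\pi(n_{r+1})}}{8}\right] \le 2\exp\left(-\frac{\Delta^2_{\pi(n_{r+1})} T_{r+1}}{32}\right).
\]

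Next I would substitute the value $T_{r+1} = \lfloor \frac{TK 2^{r+1}}{2nR} \rfloor$ from Algorithm~\ref{alg:noniid} (note this differs from the $T_r$ in Algorithm~\ref{alg:main}, which had a $4$ in the denominator and a $K$ inside). Using $2^{r+1} \ge n/n_{r+1}$ (since $n_{r+1} = \lfloor n/2^{r+1}\rfloor$) we get $T_{r+1} \ge \frac{TK}{2 n_{r+1} R}$ up to the floor, so the exponent becomes at least $\frac{\Delta^2_{\pi(n_{r+1})} TK}{64 n_{r+1} R}$, and then invoking the inequality $\max_{i} \{ i / \Delta^2_{\pi(i)}\} \le H^{\langle m\rangle}$ from \citet{ABM10} gives a per-event bound of $2\exp(-\frac{TK}{64 H^{\langle m\rangle} R})$. (The exact constants here should be tracked carefully; I expect the claim's stated bound with denominator $128 H^{\langle m\rangle} n \log(2n)$ to absorb any slack, though I would double-check whether the extra factor of $n$ in the claim's exponent is a typo or reflects a genuinely different time budget $T_r$ in this algorithm — this is the one point I would verify against Algorithm~\ref{alg:noniid}.) Finally, a union bound over $i = 1, \ldots, n$ and $r = 0, \ldots, R-1$ multiplies this by $nR$, and using $R \le \log n + 1 = \log(2n)$ converts $nR$ into $n\log(2n)$ and the $R$ in the exponent into $\log(2n)$, giving $\Pr[\bar{\E}'] \le 2n\log(2n)\exp(-\frac{TK}{c\, H^{\langle m\rangle}\log(2n)})$ for the appropriate constant $c$.

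The main obstacle, such as it is, is bookkeeping rather than conceptual: reconciling the constant factors and the appearance (or not) of an extra factor $n$ in the exponent of the claim's statement with the precise definition of $T_r$ used in Algorithm~\ref{alg:noniid}, since that definition has $2nR$ in the denominator where Algorithm~\ref{alg:main} had $4nR$ but also carried the extra factor $K$ that is absent here. Once the correct form of $T_{r+1}$ is plugged in and the floor is handled (e.g.\ assuming $T$ large enough that $T_{r+1} \ge \frac{1}{2}\cdot\frac{TK2^{r+1}}{2nR}$, or absorbing the $-1$ into the constant), the rest is a direct transcription of the Claim~\ref{cla:E} argument. After establishing this claim, the correctness proof of Theorem~\ref{thm:noniid-ub} then proceeds exactly as for Lemma~\ref{lem:induction}: assuming $\E'$ holds, the induction step showing $(Q_r \subseteq \G) \land (\G \subseteq Q_r \cup I_r) \Rightarrow (Q_{r+1} \subseteq \G) \land (\G \subseteq Q_{r+1} \cup I_{r+1})$ is word-for-word identical, since all it uses is the uniform estimation guarantee $\abs{\mu_i - \hat{\mu}^{(r)}_i} < \Delta_{\pi(n_{r+1})}/8$ and the definitions of $E_r, A_r$, which are the same in both algorithms.
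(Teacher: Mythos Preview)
Your proposal is correct and follows essentially the same approach as the paper: express $\hat{\mu}^{(r)}_i$ as the average of $T_{r+1}$ independent $[0,1]$-valued random variables with mean $\mu_i$, apply Hoeffding to get the per-event bound $2\exp(-\Delta^2_{\pi(n_{r+1})} T_{r+1}/32)$, substitute $T_{r+1}$ and invoke $n_{r+1}/\Delta^2_{\pi(n_{r+1})} \le H^{\langle m\rangle}$, then take a union bound over $i$ and $r$. Your suspicion about the extra factor of $n$ in the exponent is well-founded: the paper's own proof of this claim concludes with $\Pr[\bar{\E}'] \le 2nR\exp\!\bigl(-\tfrac{TK}{128 H^{\langle m\rangle} R}\bigr)$, i.e.\ without the additional $n$, so the $n$ in the claim's statement is indeed a typo (one minor slip in your commentary: both algorithms carry the factor $K$ in $T_r$; only the constant $2$ versus $4$ differs).
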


\begin{proof}
	Recall that $\hat{\mu}^{(r)}_i = \frac{1}{K} \sum_{k \in [K]} \hat{\mu}^{(r)}_{i,k}$ where $\hat{\mu}^{(r)}_{i,k}$ the average of $\eta \triangleq \frac{T_{r+1}}{K}$ i.i.d. random variables $X_{i,k,1}, \ldots, X_{i,k,\eta}$, each of which is sampled from distribution $\mathcal{D}_{i,k}$ with mean $\mu_{i,k}$.  We thus have
$$
\hat{\mu}^{(r)}_i  = \frac{1}{K\eta} \sum_{k \in [K]}\sum_{j \in [\eta]} X_{i,k,j} = \frac{1}{T_{r+1}} \sum_{k \in [K]}\sum_{j \in [\eta]} X_{i,k,j},$$
and 
$$\bE[\hat{\mu}^{(r)}_i] = \mu_i.
$$
By Hoeffding's inequality (Lemma~\ref{lem:chernoff}), and using the inequality $\max_{i \in I}\{i / \Delta^2_{\pi(i)}\} \le H^{\langle m\rangle}$~\cite{ABM10}, we have
	\begin{eqnarray*}
	\Pr\left[\hat{\mu}^{(r)}_i - \mu^{(r)}_i \ge \frac{\Delta_{\pi(n_{r + 1})}}{8}\right] &\le&  2 \exp\left(-\frac{\Delta^2_{\pi(n_{r+1})} T_{r+1}}{32}\right) \\
	&\le& 2 \exp\left(-\frac{TK}{128H^{\langle m\rangle} R}\right).
	\end{eqnarray*}	
	
	By a union bound over all $i = 1, 2, \ldots, n$ and $r = 0, 1, \ldots, R - 1$, we have
	\begin{eqnarray*}
		\Pr[\bar{\E'}] &\le& \sum_{r = 0}^{R - 1} \sum_{i = 1}^n \Pr\left[\abs{\hat{\mu}^{(r)}_i - \mu_i} \ge \frac{\Delta_{\pi(n_{r + 1})}}{8}\right] \\
		&\le& 2nR \exp\left(-\frac{TK}{128 H^{\langle m \rangle} R}\right)\,.
	\end{eqnarray*}
\end{proof}

We assume $\E'$ holds in the rest of the analysis. The following lemma implements the induction step.  The proof of Lemma~\ref{lem:noniid-correct} is essentially the same to the proof of Lemma~\ref{lem:induction}, and is omitted here.

\begin{lemma}
\label{lem:noniid-correct}
In the execution  of Algorithm~\ref{alg:noniid}, for any $r = 0, 1, \ldots, R - 1$, if $Q_r \subseteq \G \subseteq Q_r \cup I_r$, then $Q_{r + 1} \subseteq \G \subseteq Q_{r + 1} \cup I_{r + 1}$.
\end{lemma}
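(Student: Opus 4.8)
The plan is to follow the proof of Lemma~\ref{lem:induction} essentially verbatim, since the only structural change in the non-IID setting is the way the estimator $\hat{\mu}^{(r)}_i = \frac{1}{K}\sum_{k\in[K]}\hat{\mu}^{(r)}_{i,k}$ is formed, and that difference is already absorbed into the concentration event $\E'$. So I would condition on $\E'$ throughout and argue deterministically from there. First I would fix the bijection $\rho_r : [n_r] \to I_r$ with $\mu_{\rho_r(1)} \ge \dotsc \ge \mu_{\rho_r(n_r)}$, ordering the remaining arms by their \emph{global} means. Event $\E'$ gives, for every $i \in I_r$, the two-sided bound $\mu_i - \Delta_{\pi(n_{r+1})}/8 \le \hat{\mu}^{(r)}_i \le \mu_i + \Delta_{\pi(n_{r+1})}/8$. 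Applying this to the $m_r$ arms of $I_r$ with largest global mean and comparing with the order statistic $\hat{\mu}^{(r)}_{\sigma_r(m_r)}$ yields $\hat{\mu}^{(r)}_{\sigma_r(m_r)} \ge \mu_{\rho_r(m_r)} - \Delta_{\pi(n_{r+1})}/8$, and symmetrically $\hat{\mu}^{(r)}_{\sigma_r(m_r+1)} \le \mu_{\rho_r(m_r+1)} + \Delta_{\pi(n_{r+1})}/8$.

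Next I would combine these bounds with the definitions of $\Delta_i$ and $\hat{\Delta}^{(r)}_i$ to obtain $\abs{\hat{\Delta}^{(r)}_i - \Delta_i} \le \Delta_{\pi(n_{r+1})}/4$ for all $i \in I_r$. Here I would use the induction hypothesis together with $m_r = m - \abs{Q_r}$ to argue that the thresholds $\mu_{\rho_r(m_r)}$ and $\mu_{\rho_r(m_r+1)}$ used implicitly in $\hat{\Delta}^{(r)}_i$ coincide with the ones defining the true global gap $\Delta_i$: since $Q_r \subseteq \G \subseteq Q_r \cup I_r$ and $Q_r \cap I_r = \emptyset$, the set $\G \cap I_r$ has exactly $m_r$ elements, each with global mean strictly above that of any arm in $I_r \setminus \G$, so $\mu_{\rho_r(m_r)} = \mu_{[m]}$ and $\mu_{\rho_r(m_r+1)} = \mu_{[m+1]}$ when restricted appropriately. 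Consequently every arm with $\Delta_i \ge \Delta_{\pi(n_{r+1})}$ has $\hat{\Delta}^{(r)}_i \ge 3\Delta_{\pi(n_{r+1})}/4$, and since $E_r$ consists precisely of the arms whose empirical gap exceeds $\hat{\Delta}^{(r)}_{\pi_r(n_{r+1})}$, every $i \in E_r$ satisfies $\hat{\Delta}^{(r)}_i \ge \hat{\Delta}^{(r)}_{\pi_r(n_{r+1})} \ge 3\Delta_{\pi(n_{r+1})}/4$.

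Finally I would split the analysis of $i \in E_r$ into the two cases exactly as in Lemma~\ref{lem:induction}. If $i \in E_r \setminus A_r$ then $\hat{\Delta}^{(r)}_i = \hat{\mu}^{(r)}_{\sigma_r(m_r)} - \hat{\mu}^{(r)}_i$, giving $\hat{\mu}^{(r)}_i \le \hat{\mu}^{(r)}_{\sigma_r(m_r)} - 3\Delta_{\pi(n_{r+1})}/4$; but any $i \in \G \cap I_r$ satisfies $\hat{\mu}^{(r)}_i \ge \mu_{\rho_r(m_r)} - \Delta_{\pi(n_{r+1})}/8 \ge \hat{\mu}^{(r)}_{\sigma_r(m_r)} - \Delta_{\pi(n_{r+1})}/4$, a contradiction; hence $(E_r\setminus A_r) \cap \G = \emptyset$, so $\G \subseteq Q_{r+1} \cup I_{r+1}$. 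Symmetrically, if $i \in A_r$ then $\hat{\mu}^{(r)}_i \ge \hat{\mu}^{(r)}_{\sigma_r(m_r+1)} + 3\Delta_{\pi(n_{r+1})}/4$, while any $i \in I_r \setminus \G$ has $\hat{\mu}^{(r)}_i \le \mu_{\rho_r(m_r+1)} + \Delta_{\pi(n_{r+1})}/8 \le \hat{\mu}^{(r)}_{\sigma_r(m_r+1)} + \Delta_{\pi(n_{r+1})}/4$, again a contradiction; hence $A_r \subseteq \G$, i.e.\ $Q_{r+1} \subseteq \G$. Combining the two cases proves the claim.

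The only place that genuinely distinguishes this argument from the IID one is the verification that $\hat{\mu}^{(r)}_i$ is an average of $T_{r+1}$ independent $[0,1]$-valued samples with expectation the global mean $\mu_i$, despite the per-agent distributions $\D_{i,k}$ differing; but that is exactly what the proof of the claim bounding $\Pr[\E']$ above establishes, after which everything is deterministic. I therefore do not expect a real obstacle — the main work is the bookkeeping of keeping $m_r = m - \abs{Q_r}$ synchronized so that the "local" pivots $\mu_{\rho_r(m_r)}, \mu_{\rho_r(m_r+1)}$ equal the true global gap thresholds under the induction hypothesis, which is what makes the chain of inequalities in the two cases close.
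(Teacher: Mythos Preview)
Your proposal is correct and takes essentially the same approach as the paper: the paper explicitly states that the proof of Lemma~\ref{lem:noniid-correct} is ``essentially the same to the proof of Lemma~\ref{lem:induction}, and is omitted here,'' and your write-up reproduces that argument conditioned on $\E'$. Your added remark that the only non-IID-specific content is already handled by the concentration claim for $\E'$ matches the paper's framing exactly.
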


For $r=R$, we have $I_R = \emptyset$.  Consequently, \eqref{eq:induction} implies $Q_R = \G$.

\paragraph{Communication Cost.} The number of rounds in Algorithm~\ref{alg:noniid} is bounded by $R \le \log(2n)$. In each round, the coordinator sends $O(\abs{I_r}K) = O(nK)$ words to each agent indicating which arms should be pulled. Each of the $K$ agents send $O(\abs{I_r}) = O(n)$ words about $\{\hat{\mu}^{(r)}_{i,k}\}_{i \in I_r}$ back to the coordinator.  Therefore, the total communication cost is bounded by $O(nK \cdot R) = O(nK \log n)$.

\section{Missing Details in Section~\ref{sec:exp}}
\label{sec:app-exp}

The experimental results for the non-IID data setting are depicted in Figure~\ref{fig:results-noniid-1} (for $m = 1$) and Figure~\ref{fig:results-noniid-8} (for $m = 8$).

\begin{figure}
	\includegraphics[scale=0.25]{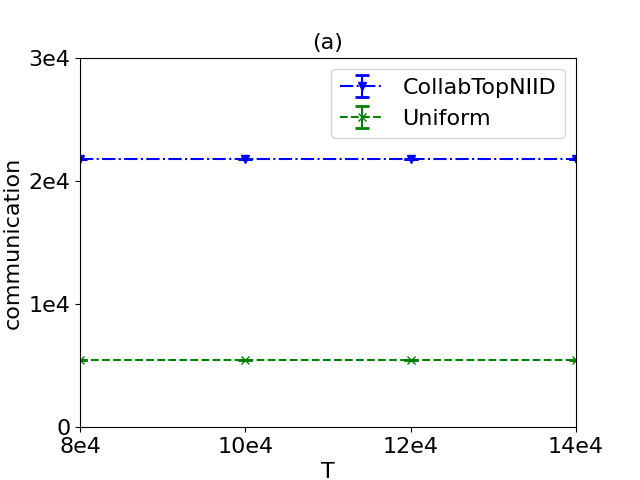}
	\includegraphics[scale=0.25]{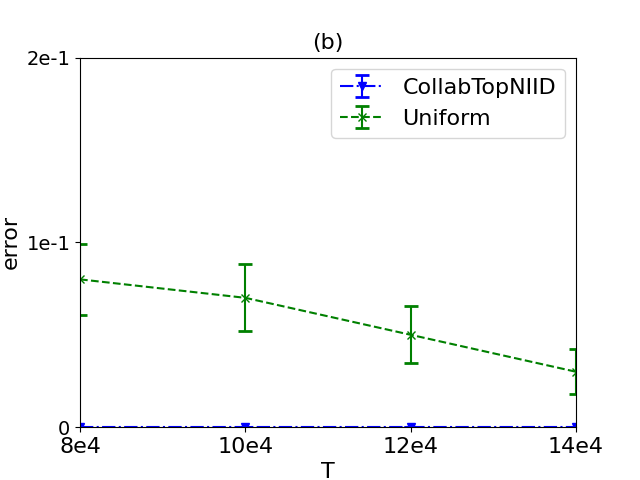}
	\caption{Performance of algorithms for top-$1$ arm identification in the non-IID setting.}\label{fig:results-noniid-1}
\end{figure}

\begin{figure}
	\includegraphics[scale=0.25]{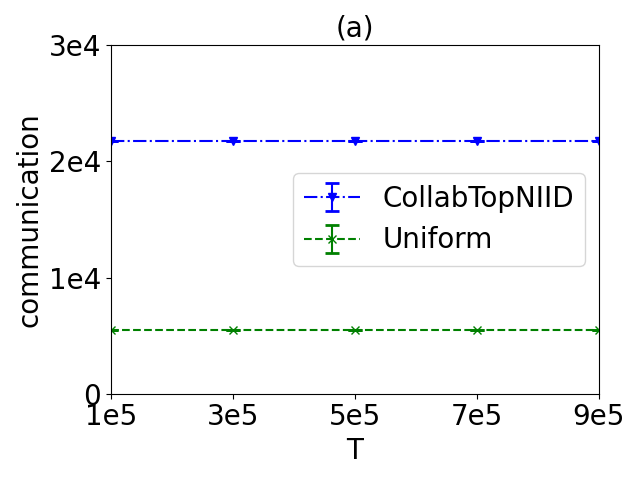}
	\includegraphics[scale=0.25]{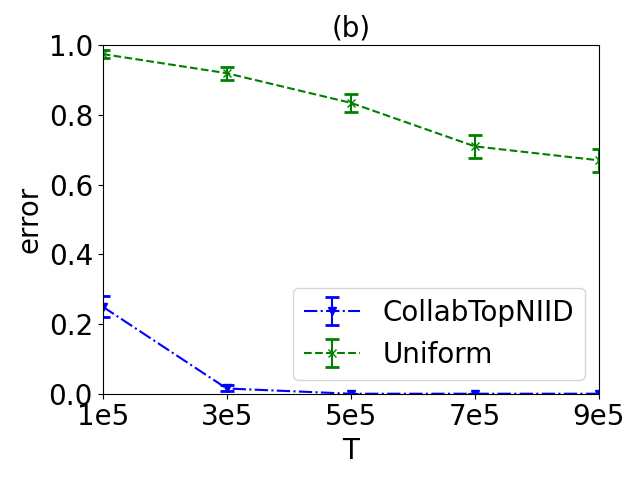}
	\caption{Performance of algorithms for top-$8$ arm identification in the non-IID setting.}\label{fig:results-noniid-8}
\end{figure}

Recall that the communication cost of \Unif\ is the lower bound of all collaborative algorithms in the non-IID setting.  From Figure~\ref{fig:results-noniid-1}(a) and Figure~\ref{fig:results-noniid-8}(a) we observe that the communication cost of \CNIID\ is not very far from that of \Unif\ (about a factor of $4$).  However, in Figure~\ref{fig:results-noniid-1}(b) and Figure~\ref{fig:results-noniid-8}(b), the error probability of \CNIID\ is much smaller than that of \Unif.

\section{Tools}
\label{sec:tool}

\begin{lemma}[Hoeffding's inequality]\label{lem:chernoff}
	Let $X_1, \dotsc, X_n \in [0, 1]$ be independent random variables and $X = \sum_{i = 1}^n X_i$. Then 
	\[
	\Pr[X > \bE[X] + t] \le \exp(-2t^2 / n)
	\]
	and
	\[
	\Pr[X < \bE[X] + t] \le \exp(-2t^2 / n)\,.
	\]
\end{lemma}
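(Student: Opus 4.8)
The plan is to use the standard Chernoff / exponential-moment method together with Hoeffding's lemma for bounded random variables. I would prove the upper tail bound in detail; the lower tail follows by symmetry, applying the upper tail to the variables $1 - X_i \in [0,1]$ (equivalently, to $-X_i$).

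First I would fix $\lambda > 0$ and apply Markov's inequality to the nonnegative random variable $e^{\lambda(X - \bE[X])}$:
\[
\Pr[X > \bE[X] + t] = \Pr\!\left[e^{\lambda(X-\bE[X])} > e^{\lambda t}\right] \le e^{-\lambda t}\,\bE\!\left[e^{\lambda(X - \bE[X])}\right].
\]
Then, using independence of $X_1, \dots, X_n$, the moment generating function factorizes as $\bE[e^{\lambda(X-\bE[X])}] = \prod_{i=1}^n \bE[e^{\lambda(X_i - \bE[X_i])}]$, which reduces the problem to bounding one term.

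The crucial ingredient is Hoeffding's lemma: a centered random variable $Y$ supported on an interval of length $\ell$ satisfies $\bE[e^{\lambda Y}] \le e^{\lambda^2 \ell^2 / 8}$ for all $\lambda$. Since each $X_i - \bE[X_i]$ is centered and, because $X_i \in [0,1]$, lies in an interval of length at most $1$, this gives $\bE[e^{\lambda(X_i - \bE[X_i])}] \le e^{\lambda^2/8}$, hence $\Pr[X > \bE[X] + t] \le e^{-\lambda t + n\lambda^2/8}$. Minimizing the exponent over $\lambda > 0$ by choosing $\lambda = 4t/n$ yields the claimed bound $\exp(-2t^2/n)$.

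I expect the only non-routine point to be the proof of Hoeffding's lemma itself. I would prove it by convexity: on $[a,b]$ bound $e^{\lambda y} \le \frac{b-y}{b-a}e^{\lambda a} + \frac{y-a}{b-a}e^{\lambda b}$, take expectations using $\bE[Y] = 0$, and write the resulting upper bound as $e^{\phi(\lambda)}$ for an explicit function $\phi$; one then checks $\phi(0) = \phi'(0) = 0$ and $\phi''(\lambda) \le (b-a)^2/4$ (as $\phi''(\lambda)$ is the variance of a two-point distribution supported on $[a,b]$, hence at most a quarter of the squared range), and concludes $\phi(\lambda) \le \lambda^2(b-a)^2/8$ by Taylor's theorem. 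Everything else — the factorization, the optimization over $\lambda$, and the symmetrization argument for the lower tail — is standard and mechanical.
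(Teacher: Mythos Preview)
Your argument is the standard and correct proof of Hoeffding's inequality via the exponential-moment method and Hoeffding's lemma. The paper itself offers no proof of this statement: it is listed in the ``Tools'' appendix as a known inequality and simply invoked where needed, so there is nothing in the paper to compare against beyond noting that your proof supplies what the paper omits. (Incidentally, the second displayed inequality in the stated lemma has an apparent typo --- it should read $\Pr[X < \bE[X] - t]$ rather than $\Pr[X < \bE[X] + t]$ --- and your symmetrization argument correctly proves the intended lower-tail bound.)
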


\begin{lemma}\cite{SSS93}\label{lem:ch-k}
	Let $X_1, \dotsc, X_n \in [0, 1]$ be a $t$-wise independent random variables and $\mu = \sum_{i = 1}^n\bE[X_i]$. Then for $\delta \le 1$ holds:
	\begin{itemize}
		\item if $t \le \lfloor \delta^2 \mu e^{-1/3}\rfloor$, $\Pr\left[\lvert \sum\limits_{i = 1}^n X_i - \mu\rvert \ge \delta \mu\right] \le e^{- \lfloor t / 2\rfloor}$.
		\item if $t \ge \lfloor \delta^2 \mu e^{-1/3}\rfloor$, $\Pr\left[\lvert \sum\limits_{i = 1}^n X_i - \mu\rvert \ge \delta \mu\right] \le e^{- \lfloor \delta^2 \mu / 3\rfloor}$.
	\end{itemize}
\end{lemma}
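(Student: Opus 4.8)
The plan is to prove this via the \emph{method of moments}, the standard route to concentration under only $t$-wise independence. The obstruction to the usual Chernoff argument is that the moment generating function $\bE[e^{\lambda \sum_i X_i}]$ does not factorize when the $X_i$ are merely $t$-wise independent; I would replace it by a single even moment of order $k \le t$, whose multinomial expansion involves at most $t$ variables per term and therefore factorizes \emph{exactly} as in the fully independent case.

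Concretely, I would first center the variables: set $p_i = \bE[X_i]$, $Y_i = X_i - p_i$, and $S = \sum_{i=1}^n Y_i = \sum_i X_i - \mu$, so that $\bE[Y_i] = 0$, $\abs{Y_i} \le 1$, and $\bE[Y_i^2] = p_i(1-p_i) \le p_i$. Fix an even integer $k \le t$, to be chosen last. Since $S^k \ge 0$, Markov's inequality gives
\begin{equation}
\label{eq:mm-1}
\Pr\left[\abs{S} \ge \delta\mu\right] = \Pr\left[S^k \ge (\delta\mu)^k\right] \le \frac{\bE[S^k]}{(\delta\mu)^k}.
\end{equation}
The restriction $k \le t$ is exactly what makes $\bE[S^k]$ computable: expanding $S^k = (\sum_i Y_i)^k$ by the multinomial theorem, each surviving summand is an expectation of a product of powers of at most $k$ distinct $Y_i$, and $t$-wise independence lets each such expectation factorize into individual moments. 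Thus $\bE[S^k]$ takes the same value it would for fully independent variables and can be estimated combinatorially.

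For that estimate, note that any multinomial term containing an exponent equal to $1$ vanishes because $\bE[Y_i] = 0$; in a surviving term every active index carries exponent $\ge 2$, so at most $k/2$ indices are active, and $\abs{\bE[Y_i^a]} \le \bE[Y_i^2] \le p_i$ for $a \ge 2$ (using $\abs{Y_i} \le 1$). Grouping summands by the partition of $k$ into parts of size at least two and by the active index set, and bounding the resulting symmetric sums in the $p_i$ by powers of $\mu = \sum_i p_i$, one obtains a bound of the form $\bE[S^k] \le g(k)\,\mu^{k/2}$; carrying this counting through---this is precisely the moment estimate of \citet{SSS93}---and substituting into \eqref{eq:mm-1} yields
\begin{equation}
\label{eq:mm-2}
\Pr\left[\abs{S} \ge \delta\mu\right] \le \left(\frac{k}{e^{2/3}\,\delta^2 \mu}\right)^{k/2}.
\end{equation}

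Finally I would optimize over $k$. Taking $k$ to be the largest even integer not exceeding $\min\{t,\ \lfloor \delta^2\mu e^{-1/3}\rfloor\}$ makes the base in \eqref{eq:mm-2} at most $e^{-1/3}/e^{2/3} = e^{-1}$, so the right-hand side is at most $e^{-k/2}$. If $t$ is the binding constraint then $k/2 = \lfloor t/2\rfloor$ and we get the first case, $e^{-\lfloor t/2\rfloor}$. If instead $\lfloor \delta^2\mu e^{-1/3}\rfloor$ binds, then $k/2$ is essentially $\delta^2\mu e^{-1/3}/2$, and since $1/(2e^{1/3}) > 1/3$ (because $2e^{1/3} < 3$) this exponent exceeds $\delta^2\mu/3$, giving the second case, $e^{-\lfloor \delta^2\mu/3\rfloor}$; the hypothesis $\delta \le 1$ is used to keep the base below $1$ so that raising $k$ only helps. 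I expect the real obstacle to be \eqref{eq:mm-2}: the exact combinatorial accounting of the multinomial terms---counting ordered partitions of $k$ into parts $\ge 2$ weighted by powers of $\mu$ and showing their total collapses to the clean factor with the sharp constant $e^{2/3}$---is where the work of \citet{SSS93} lies, whereas centering, the Markov step, and the final optimization are routine by comparison.
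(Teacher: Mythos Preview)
The paper does not prove this lemma; it is stated in the Tools appendix and cited directly from \citet{SSS93} without argument. Your proposal correctly reconstructs the method-of-moments proof that \citet{SSS93} use---centering, Markov on an even moment of order $k\le t$, factorizing via $t$-wise independence, the combinatorial bound $\bE[S^k]\le g(k)\mu^{k/2}$ collapsing to \eqref{eq:mm-2}, and then optimizing $k$ against the threshold $\lfloor\delta^2\mu e^{-1/3}\rfloor$ to split into the two stated cases---so there is nothing further in the paper to compare against, and your sketch matches the cited source.
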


\end{document}